\newtheorem{definition}{Definition}
\newtheorem{theorem}{Theorem}
\newtheorem*{theorem*}{Theorem}
\newtheorem*{prop*}{Proposition}
\DeclareMathOperator*{\argmin}{arg\,min}
\newtheorem{corollary}{Corollary}
\newtheorem{lemma}{Lemma}
\newtheorem*{lemma*}{Lemma}
\newtheorem*{PL-Inequality}{Polyak-Lojasiewicz (PL) Inequality}
\newtheorem*{corollary*}{Corollary}
\newcommand{\EE}{\mathbb{E}}
\newcolumntype{?}[1]{!{\vrule width #1}}
\newcommand*\samethanks[1][\value{footnote}]{\footnotemark[#1]}
\newcommand{\specificthanks}[1]{\@fnsymbol{#1}}
\title{A Mechanism for Producing \\ Aligned Latent Spaces with Autoencoders}
\author{Saachi Jain\thanks{Equal Contribution} \textsuperscript{,}\thanks{Laboratory for Information \& Decision Systems, and Institute for Data, Systems, and Society, Massachusetts Institute of Technology} \and Adityanarayanan Radhakrishnan\samethanks[1] \textsuperscript{,}\samethanks[2] \and Caroline Uhler\samethanks[2]$^{~}$\\}
\date{\texttt{\{saachij, aradha, cuhler\}@mit.edu}}
\begin{document}

\maketitle

\begin{abstract}
Aligned latent spaces, where meaningful semantic shifts in the input space correspond to a translation in the embedding space, play an important role in the success of downstream tasks such as unsupervised clustering and data imputation.  In this work, we prove that linear and nonlinear autoencoders produce aligned latent spaces by stretching along the left singular vectors of the data.  We fully characterize the amount of stretching in linear autoencoders and provide an initialization scheme to arbitrarily stretch along the top directions using these networks.  We also quantify the amount of stretching in nonlinear autoencoders in a simplified setting.  We use our theoretical results to align drug signatures across cell types in gene expression space and semantic shifts in word embedding spaces.\footnote{Code available at \small{\url{https://github.com/uhlerlab/latent_space_alignment}}.}

\end{abstract}

\section{Introduction}
Modern deep networks are capable of learning rich latent spaces, i.e., vector spaces that capture similarity between data points (Ch.~15 of \cite{goodfellow2016deep}). These learned structured latent spaces play an important role in the success of several downstream inference tasks such as unsupervised clustering \cite{mukherjee2019clustergan}, topic modeling \cite{dieng2020topic}, and classification in low-data regimes \cite{rudolph2019structuring}.  


For many downstream learning tasks such as those in Figure \ref{fig: Overview Schematic}, a structured latent space in which meaningful semantic directions are \textit{aligned} is particularly useful.  For example, alignment of semantic directions (as in Figure \ref{fig: Overview Schematic}a) can be useful for identifying word analogies or performing word arithmetic with commonly used word embeddings such as GloVE \cite{pennington2014glove} and word2vec \cite{rong2014word2vec}.  

Recent work \cite{COVIDAutoencoding, TheisAutoencoding} empirically demonstrated that autoencoders are capable of automatically producing aligned spaces when embedding gene expression data, but the mechanism through which neural networks produce aligned latent spaces has been unclear.  

\begin{figure}
    \centering
    \begin{subfigure}{0.48\textwidth}
    \centering
    \includegraphics[width=.7\textwidth]{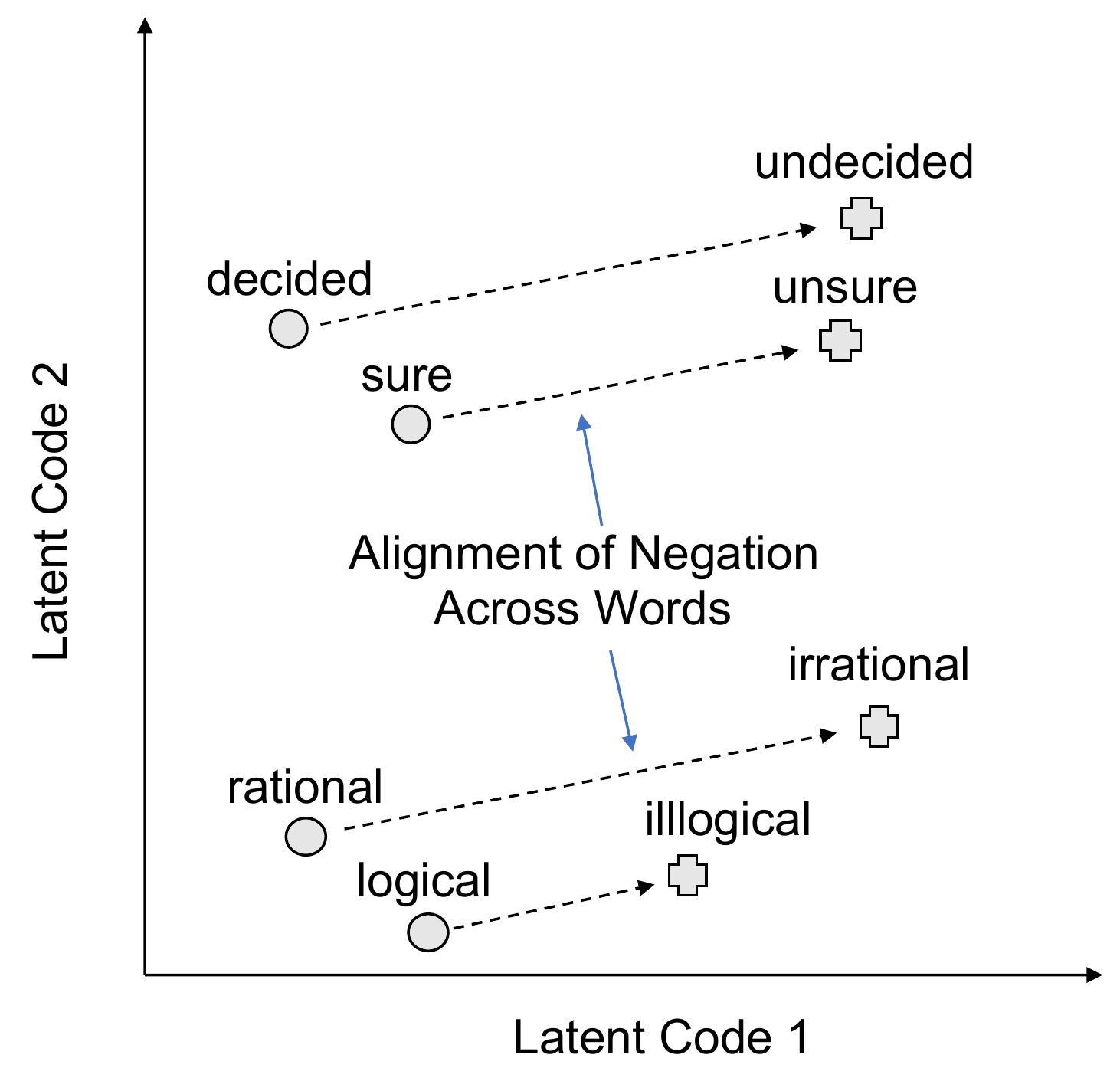}
    \caption{Word Embedding Space}
    \end{subfigure}
    \begin{subfigure}{0.48\textwidth}
    \centering
    \includegraphics[width=.935\textwidth]{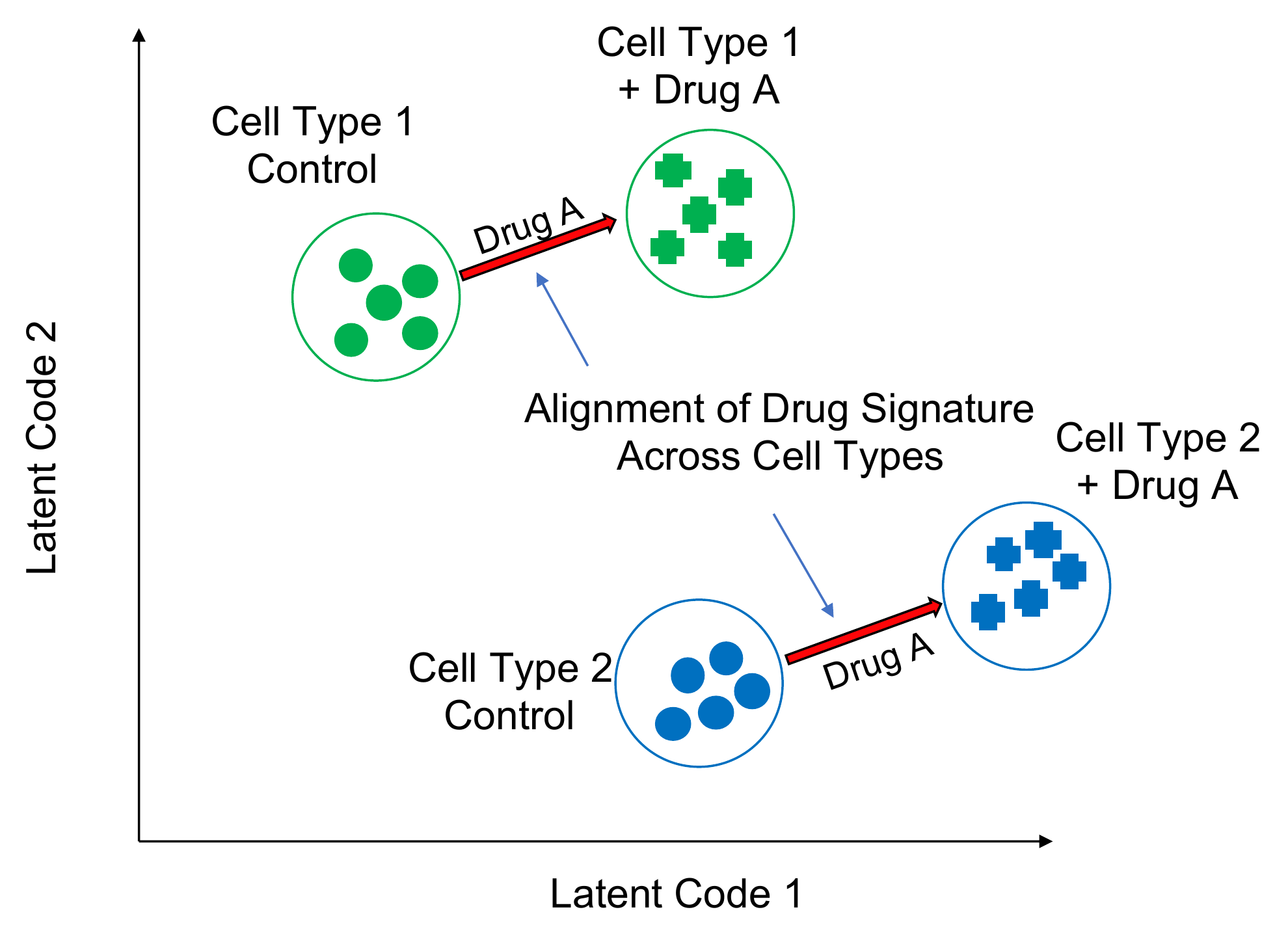}
    \caption{Gene Expression Embedding Space}
    \end{subfigure}
    \caption{Examples of aligned latent spaces. (a) Alignment of semantic directions (i.e. negation) in word embeddings. (b) Alignment of drug signatures across cell types in gene expression embeddings.}
    \label{fig: Overview Schematic}
\end{figure}





The main finding of this work is that autoencoders can produce aligned latent spaces by stretching along the top left singular vectors of the data.  The following are our contributions. 

\begin{enumerate}
    \item We prove that stretching along the top singular vectors of the data  leads to  alignment in the latent space.  We analyze the case of a mixture of Gaussians with unaligned major axes and prove that stretching along the top left singular vectors aligns these axes.  
    \item We prove that asymmetrically initialized linear autoencoders stretch along the top left singular vectors of the data, thereby producing aligned latent spaces.  We derive the stretching factor through a gradient flow analysis.  In a simplified setting, we prove that, under low rank initialization, non-linear autoencoders also stretch along the top left singular vectors of the data.  
    \item  We apply our theory to produce autoencoders on GloVe embeddings \cite{pennington2014glove} that strengthen alignment of meaningful semantic directions.  The resulting aligned latent spaces provide an improvement on word analogy tasks. We lastly apply our theory to produce autoencoders that align drug signatures across cell types on gene expression data from CMap~\cite{CMap}.  
\end{enumerate}

\section{Related Work}
\label{sec: Related Work}
Prior work \cite{TheisAutoencoding} demonstrated empirically that variational autoencoders \cite{VAE} can align the effect of perturbations across multiple cell types in single-cell gene expression data. In particular, the work demonstrated that under-parameterized, non-linear autoencoders automatically align the effect of stimulating cells by a single perturbation (IFN-$\beta$) in the latent space.  Follow-up work \cite{COVIDAutoencoding} demonstrated that over-parameterized 1-hidden layer autoencoders with leaky ReLU \cite{LeakyReLU} activation align the effect of multiple perturbations between cell types from the connectivity map (CMap) \cite{CMap}.  In this work, we prove that over-parameterized, 1-hidden layer autoencoders produce aligned latent spaces by stretching along the top left singular vectors of the data.  

Several recent works \cite{belkin2019reconciling, TwoModelsDoubleDescent, DeepDoubleDescent} study the behavior of over-parameterized neural networks in supervised learning.  While there are infinitely many interpolating solutions in the over-parameterized regime, these works demonstrated that an implicit bias through training encourages solutions that generalize.  Further works \cite{IdentityCrisis, RadhakrishnanOPAMemorization, RadhakrishnanOPAMemory} characterized the implicit bias of over-parameterized autoencoders in unsupervised settings.  In particular, \cite{RadhakrishnanOPAMemory, RadhakrishnanOPAMemorization} demonstrated that over-parameterized autoencoders learn solutions that are contractive around the training examples.  These works, however, did not analyze the latent space produced by over-parameterized autoencoders.  


It is well-known that linear autoencoders under a balanced spectral initialization \cite{Balancedness, Saxe2014, Saxe2019, ImplictRegularizationBach} learn the subspace spanned by the top principal components of the data \cite{AutoencodingPCA, AutoencodersPCAVectors}.  In this work, we analyze over-parameterized linear autoencoders under an asymmetric spectral initialization, i.e. one layer is initialized at 0 and the other is initialized using a matrix polynomial in the data.  Importantly, unlike prior spectral initialization analyses, our asymmetric initialization is easy to compute in practice.   In order to derive the solution at the end of training in linear autoencoders under spectral initialization, we use a gradient flow analysis, i.e. gradient descent with infinitesimal step size  \cite{ImplictRegularizationBach, RadhakrishnanOPAMemory, BalancednessJasonLee}.

\section{Producing Alignment through Stretching}
\begin{figure}
    \centering
    \begin{subfigure}{0.48\textwidth}
    \centering
    \includegraphics[width=.7\textwidth]{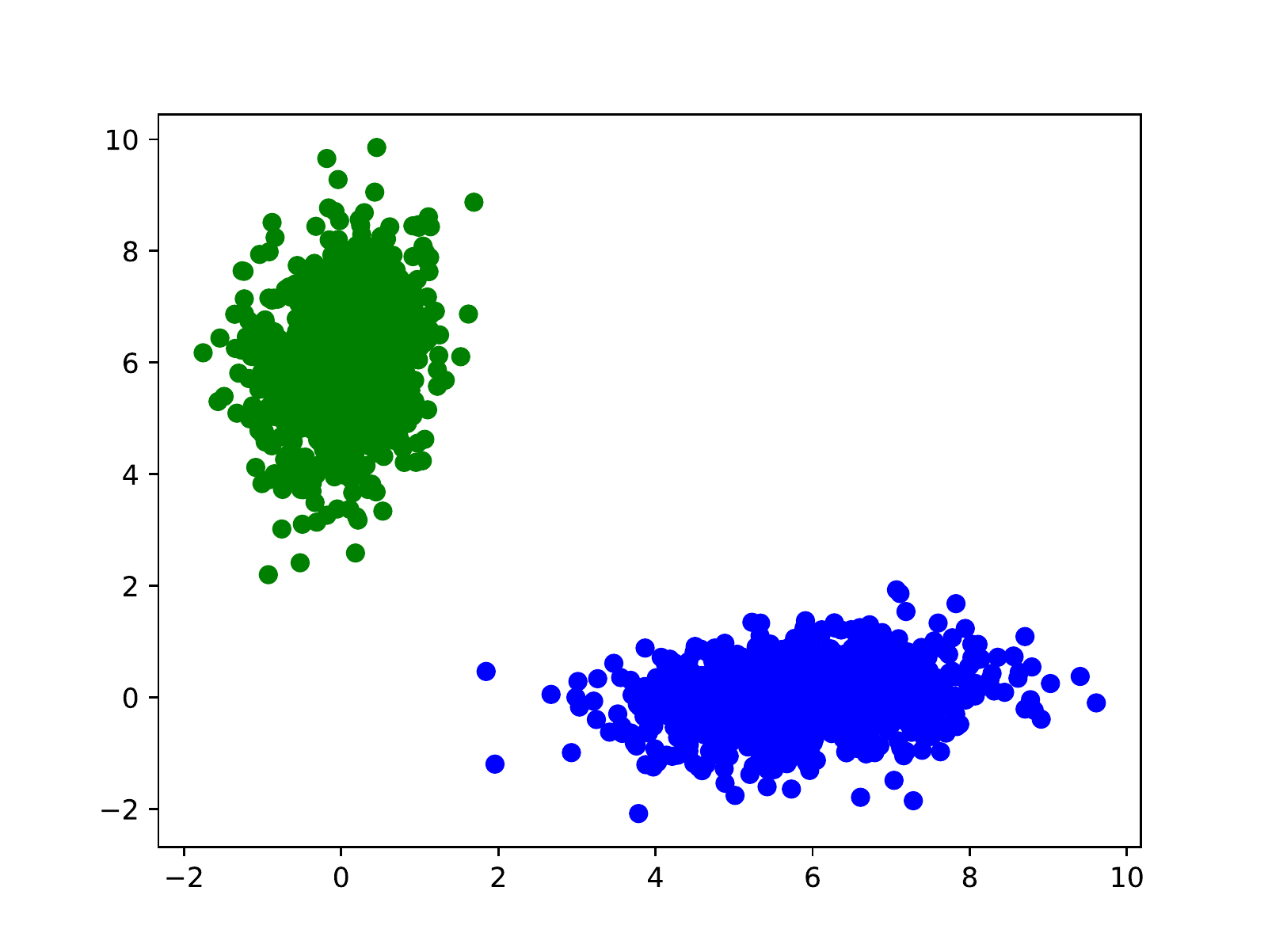}
    \caption{Original Space}
    \end{subfigure}
    \begin{subfigure}{0.48\textwidth}
    \centering
    \includegraphics[width=.7\textwidth]{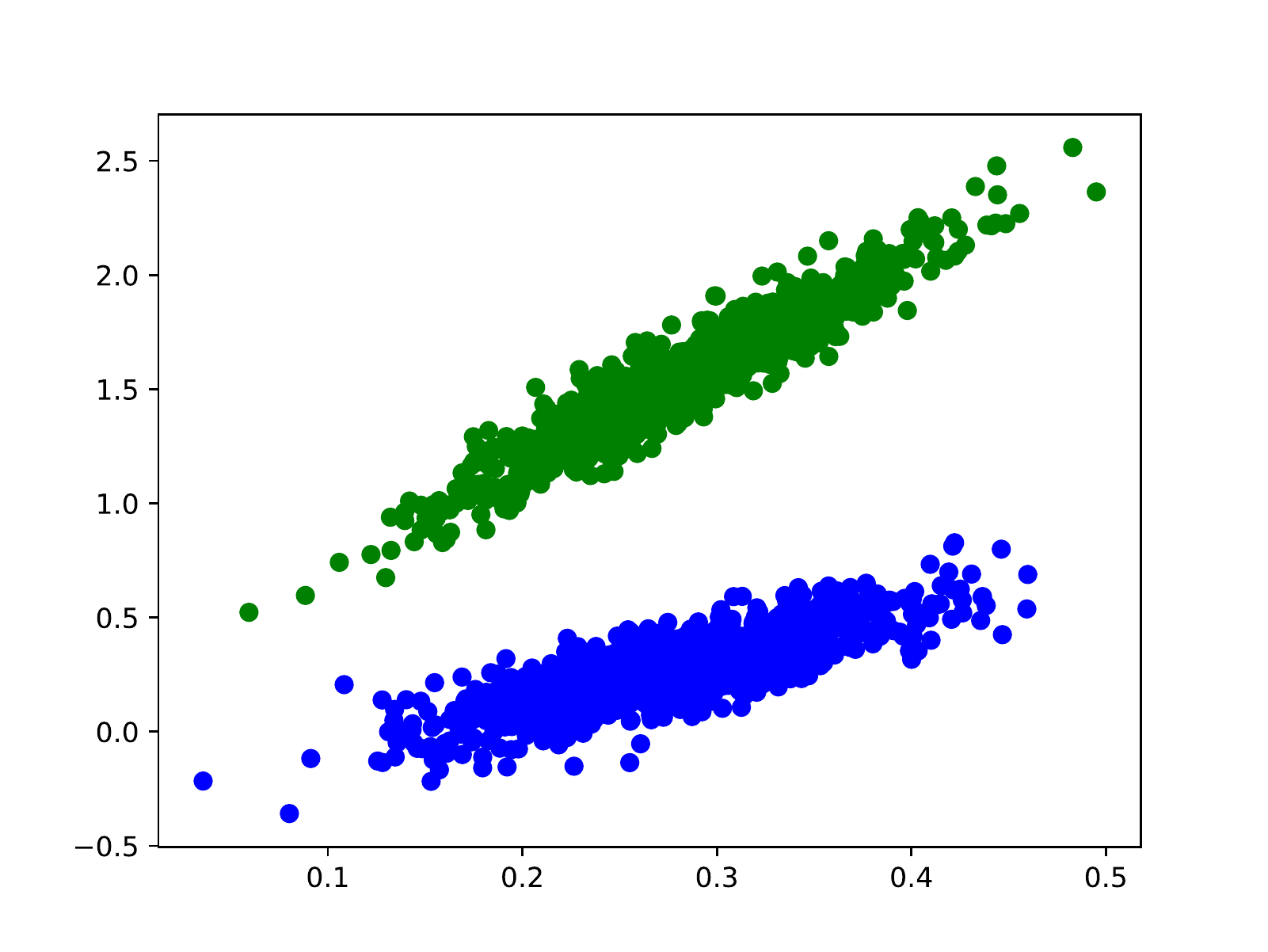}
    \caption{Encoding increases alignment}
    \end{subfigure}
    \caption{Embedding data from a Gaussian mixture via a linear transformation can lead to alignment of the major axes depending on the spectrum of the transformation.}
    \label{fig:s3_alignment}
\end{figure}



We briefly provide intuition around how stretching data, $X$, via multiplication by a linear transformation, $B$, can lead to alignment.  We begin by formally defining alignment:  we say that two directions $q_1, q_2$ become more \textit{aligned} in the latent space if their cosine similarity (which we write as $\cos(q_1, q_2)$) is smaller than their cosine similarity after embedding, $\cos(Bq_1, Bq_2)$.  Naturally, the alignment of directions in the latent space depends on $B$, as is illustrated by the following lemma (proof in Appendix \ref{appendix: Alignment Intuition}). 

\begin{lemma}
\label{lemma: Alignment Intuition}
    Let $B \in \mathbb{R}^{h \times d}$ have singular value decomposition $B=U\Sigma_BV^T$, where $\sigma_i$ and $v_i$ are the i\textsuperscript{th} singular value and right singular vector of $B$. Let $q_1, q_2 \in \mathbb{R}^d$ be unit vectors. Then,
    $$\cos(Bq_1, Bq_2) = \frac{
        \sum_{i=1}^d \sigma_i^2 \cos(q_1, v_i) \cos(q_2, v_i)
        }{\sqrt{
            \left(\sum_{i=1}^d \sigma_i^2 (\cos(q_1, v_i) ^2\right)
            \left(\sum_{i=1}^d \sigma_i^2 (\cos(q_2, v_i)^2\right)
        }}.$$
\end{lemma}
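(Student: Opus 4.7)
The plan is to compute the cosine similarity $\cos(Bq_1,Bq_2) = \frac{(Bq_1)^T(Bq_2)}{\|Bq_1\|\,\|Bq_2\|}$ directly by plugging in the SVD of $B$ and expanding in the right singular basis $\{v_i\}_{i=1}^d$. Since $V$ is orthogonal, the key algebraic fact is that $B^TB = V\Sigma_B^2 V^T = \sum_{i=1}^d \sigma_i^2 v_i v_i^T$, so quadratic forms in $B^TB$ decompose as weighted sums of the projections of the argument vectors onto the $v_i$.

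First I would handle the numerator. Writing $(Bq_1)^T(Bq_2) = q_1^T B^T B q_2$ and substituting the spectral decomposition above gives $\sum_{i=1}^d \sigma_i^2 (v_i^T q_1)(v_i^T q_2)$. Because the $v_i$ are unit vectors and $q_1,q_2$ are unit vectors by hypothesis, each inner product $v_i^T q_j$ is exactly $\cos(q_j,v_i)$, so the numerator matches the stated expression.

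Next I would handle the denominator. The same computation applied to $\|Bq_j\|^2 = q_j^T B^T B q_j$ yields $\|Bq_j\|^2 = \sum_{i=1}^d \sigma_i^2 \cos(q_j,v_i)^2$ for $j=1,2$. Taking the square root of the product of these two norms produces the denominator of the stated formula, and combining with the numerator finishes the proof.

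There is no real obstacle here: the lemma is a direct unwinding of the SVD, and the only care needed is to observe that $v_i^Tq_j = \cos(q_j,v_i)$ exactly because both factors are unit vectors (otherwise one would need to normalize). The one mild edge case is that the denominator can vanish, which happens iff $Bq_1 = 0$ or $Bq_2 = 0$; in that degenerate setting the cosine similarity is undefined on both sides, so the identity is vacuous and can safely be excluded from the statement.
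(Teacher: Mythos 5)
Your proof is correct and follows the same route as the paper's: expand $(Bq_1)^T(Bq_2)$ and $\|Bq_j\|^2$ via $B^TB = V\Sigma_B^2V^T$ and identify $v_i^Tq_j$ with $\cos(q_j,v_i)$ using the unit-norm hypothesis. Your added remark about the degenerate case $Bq_j=0$ is a sensible caveat the paper omits, but otherwise the arguments are identical.
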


Lemma \ref{lemma: Alignment Intuition} demonstrates that the spectrum of $B$ governs whether two directions become more aligned.  For example, if $\sigma_1$ is much larger than $\sigma_i$ for $i > 1$ and the top eigenvector of $B$ has roughly similar cosine similarity between the two directions (i.e $\cos (q_1, v_1) \approx \cos(q_2, v_1)$), then $q_1$ and $q_2$ will become more aligned in the latent space.  In Figure \ref{fig:s3_alignment}, we give an example of a matrix $B$ that aligns the major axes of multivariate Gaussians by stretching along the direction $[1, 1]^T$.  

In this work, we identify a class of matrices $B$ that provably boost alignment for directions of interest.  We begin with a characterization of such $B$ for Gaussian mixture models in the next section.

\section{Stretching Along Left Singular Vectors Leads to Alignment}
\label{sec: Stretching}
In this section, we will show that stretching along the top left singular vectors of the data can boost alignment along directions of interest.  We begin by using the power method \cite{TrefethenBau} to provide intuition as to why stretching along singular vectors of the data leads to alignment.  

\textbf{Intuition from the Power Method.} The power method \cite{TrefethenBau} provides a mechanism for producing trivial alignment of directions in data.  Namely, repeated multiplication of data $X  \in \mathbb{R}^{d \times n}$ by a polynomial in $XX^T$ will project the data onto the top eigenvector of $XX^T$.  In this setting, the embedded points will lie on a line and thus, the cosine similarity between the difference of pairs of unique points will be either $1$ (perfectly aligned) or $-1$ (perfectly anti-aligned).\footnote{The number of iterations needed to produce perfect alignment depends on the numerical precision.}


Fully projecting the data onto this top eigenvector is undesirable since this projection is not invertible and thus, we cannot decode the embedding back to the original input space. However, a partial iteration of the power method using $XX^T$ will stretch along the top eigenvector of $XX^T$ while yielding an invertible map.  We will demonstrate that using the power method to partially stretch along this top eigenvector is beneficial for aligning meaningful directions.  We begin by proving that stretching along this top eigenvector aligns the major axes of Gaussians in a mixture model.

\subsection{Identifying the Top Singular Vector of a Gaussian Mixture}

We will first characterize the top eigenvector and eigenvalue of $XX^T$ drawn from a mixture of Gaussians, and then use this result to quantify the increase in alignment after stretching along this top eigenvector. To aid our analysis, we choose a simple setting where we can explicitly compute the eigenvectors and eigenvalues of $\frac{1}{n}XX^T$ as $n \to \infty$ (Proof in Appendix~\ref{appendix: stretching_lemma}).

\begin{lemma}
\label{lemma: Stretching along means}

Let $X_j = \{x_i^{(j)}\}_{i=1}^{n}  \in \mathbb{R}^{d \times n}$ for $j \in [m]$, where $x_i^{(j)} \sim \mathcal{N}(\mu_j, \Sigma_j)$. Assume $\Sigma_j = D_j + \mu_j(\sum_{i\neq j} \mu_i)^T$, such that $\sum_{j=1}^m D_j = cI$. Then as $n \to \infty$, 
$$\frac{1}{n} XX^T \xrightarrow{a.s} \EE\left[\frac{1}{n}XX^T\right] = cI + \left(\sum_{j=1}^m \mu_j\right)\left(\sum_{j=1}^m \mu_j\right)^T $$
with top eigenvector $(\sum_{j=1}^m \mu_j)/||\sum_{j=1}^m \mu_j||_2$ with eigenvalue $c + ||(\sum_{j=1}^m \mu_j)||_2^2$ and remaining eigenvectors $\{u ~; ~ ||u||_2^2 = 1, u \perp (\sum_{j=1}^m \mu_j)\}$ with eigenvalue $c$. 
\end{lemma}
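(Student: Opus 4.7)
The plan is to split $\frac{1}{n}XX^T$ over the mixture components, apply the strong law of large numbers to each block, and then use the structural assumption on $\Sigma_j$ to combine the pieces into a rank-one perturbation of a scaled identity, which has an explicit spectrum.

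First I would write $X$ as the horizontal concatenation of $X_1,\ldots,X_m$, so that
\[
\frac{1}{n}XX^T \;=\; \sum_{j=1}^m \frac{1}{n}X_j X_j^T \;=\; \sum_{j=1}^m \frac{1}{n}\sum_{i=1}^n x_i^{(j)}\bigl(x_i^{(j)}\bigr)^T .
\]
Since the $x_i^{(j)}$ are i.i.d.\ $\mathcal{N}(\mu_j,\Sigma_j)$ with finite second moment, the strong law of large numbers applied entrywise gives
\[
\frac{1}{n}X_jX_j^T \xrightarrow{a.s.} \EE\bigl[x_i^{(j)}(x_i^{(j)})^T\bigr] = \Sigma_j + \mu_j\mu_j^T .
\]
Summing over $j$ yields $\frac{1}{n}XX^T \xrightarrow{a.s.} \sum_{j=1}^m \Sigma_j + \sum_{j=1}^m \mu_j\mu_j^T$, which coincides with $\EE[\frac{1}{n}XX^T]$.

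Next I would substitute the assumed form $\Sigma_j = D_j + \mu_j\bigl(\sum_{i\neq j}\mu_i\bigr)^T$ and use $\sum_j D_j = cI$ to obtain
\[
\sum_{j=1}^m \Sigma_j + \sum_{j=1}^m \mu_j\mu_j^T
\;=\; cI \;+\; \sum_{j=1}^m \mu_j\Bigl(\sum_{i\neq j}\mu_i\Bigr)^T + \sum_{j=1}^m \mu_j\mu_j^T
\;=\; cI \;+\; \sum_{j=1}^m \mu_j\Bigl(\sum_{i=1}^m \mu_i\Bigr)^T,
\]
where the key algebraic step is folding the diagonal term $\mu_j\mu_j^T$ into the off-diagonal sum so the inner sum runs over all $i$. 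Pulling the common right factor out then gives $cI + \bigl(\sum_j \mu_j\bigr)\bigl(\sum_j \mu_j\bigr)^T$, as desired.

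Finally, I would read off the spectrum of this rank-one perturbation of a scaled identity. Writing $\mu := \sum_j \mu_j$, the matrix $cI + \mu\mu^T$ acts as multiplication by $c+\|\mu\|_2^2$ on the direction $\mu/\|\mu\|_2$ and as multiplication by $c$ on $\mu^\perp$, which is a one-line check. The main obstacle, such as it is, is the bookkeeping in the second step—specifically recognizing that combining $\sum_j \mu_j(\sum_{i\neq j}\mu_i)^T$ with $\sum_j \mu_j\mu_j^T$ completes the outer-product $(\sum_j\mu_j)(\sum_j\mu_j)^T$; once this is noticed, the rest is routine. I would also briefly remark that almost-sure convergence of the matrix implies almost-sure convergence of its eigenvalues and (unit) eigenvectors on the complement of the degeneracy locus, which justifies the statement about the limiting spectral data.
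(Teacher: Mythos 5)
Your proposal is correct and follows essentially the same route as the paper: decompose $\frac{1}{n}XX^T$ over the mixture components, apply the strong law of large numbers to obtain $\sum_j(\Sigma_j+\mu_j\mu_j^T)$, and then fold $\sum_j\mu_j\mu_j^T$ into $\sum_j\mu_j(\sum_{i\neq j}\mu_i)^T$ to complete the outer product $(\sum_j\mu_j)(\sum_j\mu_j)^T$. Your explicit verification of the spectrum of $cI+\mu\mu^T$ and the remark on convergence of the spectral data are slightly more careful than the paper's proof, which stops after deriving the limiting matrix, but the argument is the same.
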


By Lemma \ref{lemma: Stretching along means}, multiplication by $\frac{1}{n}XX^T$  will stretch along the direction of the average of the means.  We next verify that stretching along this direction leads to an alignment of major axes in a mixture of Gaussians.  



\subsection{Quantifying Alignment in Gaussian Mixture Models}

We now consider a setting with two Gaussians in $\mathbb{R}^2$ and use Lemma \ref{lemma: Stretching along means} to prove that stretching this mixture of Gaussians along the average of the means will align the major axes of the individual Gaussians (proof in Appendix \ref{appendix: two gaussians}).
\begin{corollary}
\label{corollary: two gaussians}
Let $X_1, X_2 \in \mathbb{R}^{2 \times n}$ with $X_1^{(i)} \sim \mathcal{N}(e_1, \Sigma_1), X_2^{(i)} \sim \mathcal{N}(e_2, \Sigma_2)$ for $i \in [n]$ where $e_1=[1,0]^T$, $e_2=[0,1]^T$, and $J$ is the all ones matrix.  Let $X = [ X_1 | X_2 ]$, and
$$D_1 = \begin{bmatrix} c+1 & 0 \\ 0 & 1 \end{bmatrix}, \Sigma_1 = D_1 + J ~~ ; ~~ D_2 = \begin{bmatrix} 1 & 0 \\ 0 & c+1 \end{bmatrix}, \Sigma_2 = D_2 + J.$$
Then $\EE[XX^T]$ has eigenvectors $u_1=\frac{1}{\sqrt{2}}[1,1]^T$ and  $u_2=\frac{1}{\sqrt{2}}[1,-1]^T$.

Consider the embedding $BX$, where $B \in \mathbb{R}^{d \times d}$ has eigenvectors $u_1$ and $u_2$ and corresponding eigenvalues $\alpha$ and $\beta$ where $\alpha^2 > \beta^2$. Let $q_{\Sigma_1}$ and $q_{\Sigma_2}$ be the top eigenvectors of $\Sigma_1$ and $\Sigma_2$. Then the cosine similarity of $q_{\Sigma_1}$ and $q_{\Sigma_2}$ before and after embedding with $B$ is:
$$ \cos(q_{\Sigma_1}, q_{\Sigma_2}) = \frac{2}{\sqrt{4 + c^2}}, \quad \quad  \cos(Bq_{\Sigma_1}, Bq_{\Sigma_2}) = \frac{2 + \gamma \sqrt{c^2 + 4}}{2\gamma + \sqrt{c^2 + 4}}$$
for $\gamma = \frac{(\alpha^2 - \beta^2)}{(\alpha^2 + \beta^2)}$.
Moreover, $\cos(Bq_{\Sigma_1}, Bq_{\Sigma_2})>  \cos(q_{\Sigma_1}, q_{\Sigma_2}).$
\end{corollary}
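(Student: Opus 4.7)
The plan is to exploit the coordinate-swap symmetry of the setup: the entire problem is invariant under the reflection that swaps the two standard basis vectors, which turns $\Sigma_1$ into $\Sigma_2$ (and vice versa) and fixes both $u_1$ and $u_2$. This symmetry will make the eigenvectors $q_{\Sigma_1}$ and $q_{\Sigma_2}$ reflections of each other, and since $B$ commutes with this reflection (its eigenvectors $u_1, u_2$ are the symmetric/antisymmetric basis), $Bq_{\Sigma_1}$ and $Bq_{\Sigma_2}$ will also be reflections of each other. This reduces most computations to a single calculation plus a sign flip.

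First I would write out $\Sigma_1 = \begin{pmatrix} c+2 & 1 \\ 1 & 2 \end{pmatrix}$ explicitly (and $\Sigma_2$ by swapping coordinates), solve the characteristic polynomial to get top eigenvalue $\lambda_1 = \tfrac{(c+4)+\sqrt{c^2+4}}{2}$, and write $q_{\Sigma_1} = (1,t)^T/\sqrt{1+t^2}$ with $t = (\sqrt{c^2+4}-c)/2$. By the swap symmetry, $q_{\Sigma_2} = (t,1)^T/\sqrt{1+t^2}$, so $\cos(q_{\Sigma_1}, q_{\Sigma_2}) = 2t/(1+t^2)$. A short algebraic step using $t(\sqrt{c^2+4}+c)/2 = 1$ gives $2t/(1+t^2) = 2/\sqrt{c^2+4}$, matching the claim. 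For the eigenvectors of $\EE[XX^T]$, I would compute $\EE[XX^T] = n(\Sigma_1 + \Sigma_2 + e_1e_1^T + e_2e_2^T) = n\bigl((c+3)I + 2J\bigr)$, whose eigenvectors are immediately $u_1$ and $u_2$ (since $J$ has eigenvectors $u_1$ with eigenvalue $2$ and $u_2$ with eigenvalue $0$).

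Next I would parameterize $B = \alpha u_1 u_1^T + \beta u_2 u_2^T = \tfrac12\begin{pmatrix}\alpha+\beta & \alpha-\beta\\ \alpha-\beta & \alpha+\beta\end{pmatrix}$ and compute $B(1,t)^T$ and $B(t,1)^T$. Letting $a=\alpha+\beta$ and $b=\alpha-\beta$, these are $\tfrac12(a+bt,\, b+at)^T$ and $\tfrac12(at+b,\, bt+a)^T$, which are indeed swaps of each other, so they share the same norm. The numerator of the cosine similarity expands to $ab(1+t^2)+(a^2+b^2)t$ and the squared norm to $\tfrac12\bigl((a^2+b^2)(1+t^2)+4abt\bigr)$. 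Substituting $ab = \alpha^2-\beta^2$, $a^2+b^2 = 2(\alpha^2+\beta^2)$, dividing numerator and denominator by $(\alpha^2+\beta^2)(1+t^2)$, and using $t/(1+t^2) = 1/\sqrt{c^2+4}$ yields the stated form $(2+\gamma\sqrt{c^2+4})/(2\gamma+\sqrt{c^2+4})$.

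For the final inequality, writing $s = \sqrt{c^2+4} \geq 2$, the comparison $\frac{\gamma s+2}{2\gamma+s} > \frac{2}{s}$ clears denominators (both positive since $\gamma > 0$, as $\alpha^2 > \beta^2$) to $\gamma s^2 + 2s > 4\gamma + 2s$, i.e.\ $\gamma(s^2-4) > 0$, which holds since $\gamma > 0$ and $s^2-4 = c^2 > 0$ whenever $c \neq 0$. The only real obstacle is keeping the algebra in step three clean; the symmetry observation is what makes the computation tractable by collapsing it to a single scalar parameter $t$, rather than a generic two-parameter eigenvector calculation.
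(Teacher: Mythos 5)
Your proposal is correct and follows essentially the same route as the paper's proof (compute $\EE[XX^T]=(c+3)I+2J$ up to the factor $n$, write $B$ in the $u_1,u_2$ eigenbasis, find the top eigenvectors of $\Sigma_1,\Sigma_2$, and compare cosines directly); the coordinate-swap symmetry is simply a clean way of organizing the ``fair amount of arithmetic'' that the paper elides, and your algebra checks out. Your observation that the final inequality is strict only when $c\neq 0$ (since it reduces to $\gamma(s^2-4)=\gamma c^2>0$) is a small but valid refinement of the paper's statement.
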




Thus, if $B$ stretches along the top left singular vectors of $X$, then the major axes of individual Gaussians in the mixture become more aligned.  As is shown by the following corollary, setting $B = (\frac{1}{n} XX^T)^k$, for some degree $k \geq 1$ leads to alignment (Proof in Appendix \ref{appendis: two gaussians with xxt}).


\begin{corollary}
\label{corollary: two gaussians with xxt}
In the setting of Corollary \ref{corollary: two gaussians}, if $B = (\frac{1}{n}XX^T)^k$,  then as $n$ tends to infinity,
$$\cos(B q_{\Sigma_1}, B q_{\Sigma_2}) \xrightarrow{a.s} \frac{2 + \gamma \sqrt{c^2 + 4}}{2\gamma + \sqrt{c^2 + 4}} , \quad \gamma = \frac{(\alpha^2 - \beta^2)}{(\alpha^2 + \beta^2)}$$
for $\alpha = (c+7)^k, \beta = (c+3)^k$.  Moreover, as $k$ increases $\cos(B q_{\Sigma_1}, B q_{\Sigma_2})$ also increases. 
\end{corollary}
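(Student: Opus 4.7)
The plan is to reduce to Corollary 1 by identifying the limiting spectrum of $B=(\frac{1}{n}XX^T)^k$ as $n\to\infty$, and then to prove monotonicity in $k$ with an elementary derivative calculation.

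First I would compute the population matrix $\EE[\frac{1}{n}XX^T]$ directly. Since $X=[X_1\,|\,X_2]$ contributes two sums of $n$ independent outer products, we have
$$\EE\!\left[\frac{1}{n}XX^T\right] \;=\; \Sigma_1 + \mu_1\mu_1^T + \Sigma_2 + \mu_2\mu_2^T \;=\; (D_1+D_2) + 2J + (e_1e_1^T+e_2e_2^T) \;=\; (c+3)I + 2J.$$
(The setup of Corollary 1 does not literally fit the hypothesis of Lemma 2, since the $J$ in $\Sigma_1,\Sigma_2$ would have to be $\mu_1\mu_2^T=e_1e_2^T$ under that Lemma; the expectation is however obtained by the same direct computation.) An entrywise strong law of large numbers gives $\frac{1}{n}XX^T\xrightarrow{a.s}(c+3)I+2J$, and continuity of $M\mapsto M^k$ then yields $B\xrightarrow{a.s}((c+3)I+2J)^k$. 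Since $J$ has eigenvectors $u_1,u_2$ with eigenvalues $2,0$, the limit of $B$ has the same eigenvectors with eigenvalues $\alpha=(c+7)^k$ and $\beta=(c+3)^k$. Applying Corollary 1 to this limiting matrix, together with continuity of the cosine similarity in $B$ (valid because $\beta>0$ so $B q_{\Sigma_i}\neq 0$), yields the claimed limiting value of $\cos(Bq_{\Sigma_1},Bq_{\Sigma_2})$ with the stated $\gamma$.

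For monotonicity in $k$, I would decompose the argument into two steps. Setting $r=(c+7)/(c+3)>1$, one has $\gamma(k)=(r^{2k}-1)/(r^{2k}+1)$, which is strictly increasing in $k$ because $t\mapsto(t-1)/(t+1)$ is increasing on $(1,\infty)$ and $r^{2k}$ is increasing in $k$. It then suffices to show that $f(\gamma):=(2+\gamma\sqrt{c^2+4})/(2\gamma+\sqrt{c^2+4})$ is increasing in $\gamma$; the quotient rule gives
$$f'(\gamma) \;=\; \frac{\sqrt{c^2+4}\,(2\gamma+\sqrt{c^2+4}) \;-\; 2\,(2+\gamma\sqrt{c^2+4})}{(2\gamma+\sqrt{c^2+4})^2} \;=\; \frac{c^2}{(2\gamma+\sqrt{c^2+4})^2} \;\geq\; 0,$$
where the cross terms $\pm 2\gamma\sqrt{c^2+4}$ in the numerator cancel cleanly. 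Combining the two monotonicity facts gives the desired monotonicity in $k$.

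The main subtlety is bookkeeping rather than mathematics: one must verify that the almost-sure convergence $\frac{1}{n}XX^T\to(c+3)I+2J$ lifts through both the $k$-th power and the cosine similarity, and note that Lemma 2 is not directly applicable because Corollary 1's $\Sigma_j$ differs from the form assumed there. Both issues are handled by the direct expectation computation and standard continuity arguments; beyond that, the proof is essentially mechanical once Corollary 1 is in hand.
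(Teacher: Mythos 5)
Your proposal is correct and follows essentially the same route as the paper's proof: compute the population limit $\EE[\tfrac{1}{n}XX^T]=(c+3)I+2J$, lift the almost-sure convergence through the $k$-th power and the cosine similarity by the continuous mapping theorem, and read off the formula from Corollary \ref{corollary: two gaussians} with $\alpha=(c+7)^k$, $\beta=(c+3)^k$. You go beyond the paper in two useful ways. First, you correctly observe that the covariances in Corollary \ref{corollary: two gaussians} (namely $\Sigma_j=D_j+J$) do not literally satisfy the hypothesis $\Sigma_j=D_j+\mu_j(\sum_{i\neq j}\mu_i)^T$ of Lemma \ref{lemma: Stretching along means}, so the expectation must be computed directly rather than quoted; the paper glosses over this (and its displayed ``$I+2J$'' should read $(c+3)I+2J$). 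Second, the paper only asserts the monotonicity in $k$ and supports it empirically, whereas you prove it: $\gamma(k)=(r^{2k}-1)/(r^{2k}+1)$ with $r=(c+7)/(c+3)>1$ is increasing in $k$, and the quotient-rule computation $f'(\gamma)=c^2/(2\gamma+\sqrt{c^2+4})^2\geq 0$ (which checks out) shows the cosine is increasing in $\gamma$. Your version is therefore a strictly more complete argument along the same lines.
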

\textbf{Remarks.} We empirically verify this result in Appendix \ref{appendis: two gaussians with xxt}. The above corollaries imply that stretching along the top singular vector of the data aligns the major axes of the individual Gaussians.  In general, however, the degree of alignment depends on how aligned each direction of interest is to the overall stretching direction. As a simple example, in the above setting, stretching instead along the major axis of one of the Gaussians would not significantly boost alignment. Indeed, the ideal direction, $v$, to stretch along is one where semantically meaningful directions are roughly equally aligned with $v$.   Consistent with the result of this section, we show in Section \ref{sec: Applications} that stretching along the top eigenvector of $XX^T$ leads to alignment of semantically meaningful directions on large datasets.

\section{Alignment with Over-parameterized Autoencoders}
\label{sec: Autoencoders}
In this section, we prove that over-parameterized, 1-hidden layer linear autoencoders trained using gradient flow from an asymmetric spectral initialization produce embeddings that are stretched along the top left singular vectors of the data.  We then prove that a similar phenomenon occurs for non-linear 1-hidden layer ReLU autoencoders trained on two data points.  We lastly connect this stretching to alignment of directions of interest in the latent space by demonstrating that asymmetrically initialized autoencoders align the major axes of a Gaussians in the mixture model from Section \ref{sec: Stretching}.  We begin with a review of autoencoder preliminaries. 

\subsection{Autoencoder Preliminaries}
\label{sec: Preliminaries}

We begin with the training setup for vanilla autoencoders.  

\begin{definition}[Autoencoding]  Given a dataset $X \in \mathbb{R}^{d \times n}$ with $n$ training examples and neural network $f_{\theta}(x): \mathbb{R}^{d} \to \mathbb{R}^{d}$ with parameters $\theta \in \mathbb{R}^{p}$, gradient descent is used to solve
\begin{align}
    \label{eq: Autoencoder Loss} 
    \argmin_{\theta \in \mathbb{R}^{p}}  \mathcal{L}(X, \theta), ~~ \text{where} ~~ \mathcal{L}(X, \theta) =  \sum_{i=1}^{n} \| x^{(i)} - f(x^{(i)})\|_2^2, ~ 
\end{align}
 where $x^{(i)}$ denotes the $i$\textsuperscript{th} column of $X$.  
\end{definition}

In this work, we will reference the following spectral initialization scheme used in \cite{ImplictRegularizationBach, Saxe2014, Saxe2019}. 
\begin{definition}[Spectral Initialization]  
\label{def: Spectral Initialization}
Let $f(x) = A B x$ denote a linear autoencoder with $x \in \mathbb{R}^{d}$, $A \in \mathbb{R}^{d \times k}$, and $B \in \mathbb{R}^{k \times d}$.  We assume gradient descent is used to solve Eq. \eqref{eq: Autoencoder Loss} on dataset $X$ and let $A^{(0)}, B^{(0)}$ denote the parameters at initialization.  The \textbf{spectral initialization} corresponds to $A^{(0)} = U_X \Sigma_A^{(0)} U_X^T,  B^{(0)} = U_X \Sigma_B^{(0)} U_X^T$ where $U_X$ denotes the left singular vectors of $X$ and $\Sigma_A^{(0)}, \Sigma_B^{(0)}$ are positive, semi-definite diagonal matrices.   
\end{definition}

The spectral initialization scheme is commonly used to understand training dynamics in neural networks.  Assuming \textit{symmetric} spectral initialization (i.e. $\Sigma_A^{(0)} = \Sigma_B^{(0)}$)  and that the rank of $A^{(0)}$ equals the rank of $X$, solving Eq. \eqref{eq: Autoencoder Loss} with gradient flow (i.e. gradient descent with infinitesimal step size) recovers the well-known result that linear autoencoders project onto the subspace spanned by the top principal components of the data \cite{AutoencodersPCAVectors}.  



\subsection{Stretching in Over-parameterized Linear Autoencoders}
We begin by deriving the gradient flow solution for 1-hidden layer autoencoders from arbitrary spectral initialization \cite{ImplictRegularizationBach, Saxe2014, Saxe2019}.  By exploring how initialization impacts the singular values of the gradient flow solution, we  derive a simple, computationally tractable asymmetric spectral initialization scheme that yields stretching along the top left singular vectors of the training data. 

\begin{theorem}
\label{theorem: two layer linear autoencoder}
Let $f(x) = ABx$ denote a neural network with $A, B \in \mathbb{R}^{d \times d}$ and $x \in \mathbb{R}^{d}$, and let $X \in \mathbb{R}^{d \times n}$ denote the training data.  If $A, B$ are initialized according to the spectral initialization (Definition \ref{def: Spectral Initialization}), then gradient flow on $A, B$ converges to: 
\begin{align*}
    A^{(\infty)} &= U_X \Sigma_A^{(\infty)} U_X^T ~~;~~ B^{(\infty)} = U_X \Sigma_B^{(\infty)} U_X^T,  ~~  \\
    {\Sigma_{B}}_{i,i}^{(\infty)} &= \sqrt{\frac{({\Sigma_B^{(0)}}_{i,i}^2 - {\Sigma_A^{(0)}}_{i,i}^2)}{2} + \sqrt{1 + \frac{({\Sigma_B^{(0)}}_{i,i}^2 - {\Sigma_A^{(0)}}_{i,i}^2)^2}{4}}} ~~ ; ~~ {\Sigma_{A}^{(0)}}_{i,i}^{(\infty)} = {\Sigma_{B}^{(0)}}_{i,i}^{-1}.
\end{align*}
\end{theorem}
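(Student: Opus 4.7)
My plan is to change basis into the left singular vectors of $X$ so that the matrix ODE of gradient flow decouples into independent scalar ODEs, one for each singular direction, and then solve for the fixed points using a balancedness-type conservation law.

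First, I would verify that the ansatz $A(t) = U_X \Sigma_A(t) U_X^T$, $B(t) = U_X \Sigma_B(t) U_X^T$ with $\Sigma_A(t), \Sigma_B(t)$ diagonal is preserved under gradient flow. Writing $XX^T = U_X \Sigma_X^2 U_X^T$ (where $\Sigma_X^2$ is a $d\times d$ diagonal matrix of squared singular values) and computing $\nabla_A \mathcal{L} = -2(I-AB)XX^T B^T$ and $\nabla_B \mathcal{L} = -2 A^T (I-AB) XX^T$, one sees that both gradients remain of the form $U_X \cdot (\text{diagonal}) \cdot U_X^T$ whenever $A$ and $B$ are. Since the initialization has this form, the flow stays on this invariant manifold. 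On this manifold, letting $a_i(t) = (\Sigma_A(t))_{ii}$, $b_i(t) = (\Sigma_B(t))_{ii}$, and $s_i = (\Sigma_X^2)_{ii}$, the loss collapses to
\begin{equation*}
\mathcal{L} = \sum_{i=1}^d s_i (1 - a_i b_i)^2,
\end{equation*}
and the ODEs decouple across $i$ as $\dot a_i = 2 s_i (1 - a_i b_i) b_i$ and $\dot b_i = 2 s_i (1 - a_i b_i) a_i$.

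Next I would establish the conservation law $\frac{d}{dt}(a_i^2 - b_i^2) = 2 a_i \dot a_i - 2 b_i \dot b_i = 0$, so that $a_i(t)^2 - b_i(t)^2 \equiv c_i := (\Sigma_A^{(0)})_{ii}^2 - (\Sigma_B^{(0)})_{ii}^2$ for all $t$. Then I would identify the fixed points. On each direction with $s_i > 0$, stationarity requires $a_i b_i = 1$ (the degenerate case $a_i = b_i = 0$ is repelling under the flow whenever at least one coordinate is initialized positive, which follows because $\dot a_i = 2 s_i b_i > 0$ whenever $b_i > 0, a_i b_i < 1$, and analogously for $b_i$; moreover non-negativity of both coordinates is preserved since $a_i = 0$ forces $\dot a_i \geq 0$). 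Combining $a_i^{(\infty)} b_i^{(\infty)} = 1$ with $(a_i^{(\infty)})^2 - (b_i^{(\infty)})^2 = c_i$, I would substitute $a_i = 1/b_i$ to obtain the biquadratic $b_i^4 + c_i b_i^2 - 1 = 0$, whose unique positive solution is
\begin{equation*}
(b_i^{(\infty)})^2 = \frac{-c_i + \sqrt{c_i^2 + 4}}{2} = \frac{(\Sigma_B^{(0)})_{ii}^2 - (\Sigma_A^{(0)})_{ii}^2}{2} + \sqrt{1 + \frac{((\Sigma_B^{(0)})_{ii}^2 - (\Sigma_A^{(0)})_{ii}^2)^2}{4}},
\end{equation*}
matching the stated formula, and $a_i^{(\infty)} = 1/b_i^{(\infty)}$ recovers the $\Sigma_A^{(\infty)} = (\Sigma_B^{(\infty)})^{-1}$ claim.

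The main obstacle is the global convergence argument: existence of the fixed point does not by itself imply the flow converges to it. I would handle this by using the conservation law to reduce each two-dimensional trajectory to a monotone one-dimensional system. On the invariant curve $a_i^2 - b_i^2 = c_i$ with $a_i, b_i \geq 0$, the product $a_i b_i = b_i \sqrt{b_i^2 + c_i}$ (or the analogous expression when $c_i < 0$) is strictly increasing in $b_i$, so the reduced loss $s_i(1 - a_i b_i)^2$ is strictly quasi-convex with unique minimizer at the fixed point derived above. Since gradient flow is strictly decreasing in $\mathcal{L}$ along non-stationary trajectories and the sublevel sets are bounded on the invariant curve, LaSalle's principle forces convergence to that minimizer. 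Reassembling the diagonal entries back into $U_X \Sigma_A^{(\infty)} U_X^T$ and $U_X \Sigma_B^{(\infty)} U_X^T$ yields the theorem.
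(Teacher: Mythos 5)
Your proposal is correct and follows essentially the same route as the paper's proof: restriction to the invariant spectral manifold, reduction to decoupled scalar dynamics, the balancedness conservation law $a_i^2 - b_i^2 = \text{const}$, and the interpolation condition $a_i b_i = 1$ yielding the biquadratic with the stated positive root. The only substantive difference is that you supply a LaSalle-type argument for why the flow actually converges to the interpolating fixed point (and flag the degenerate $a_i^{(0)}=b_i^{(0)}=0$ case), whereas the paper simply asserts interpolation at the end of training; this is a welcome tightening rather than a different approach.
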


The full proof is provided in Appendix \ref{appendix: Gradient flow solution linear autoencoders}; it is inspired by the proof strategy used in \cite{RadhakrishnanOPAMemory}.  We now provide an initialization scheme that leads to alignment in the latent space.   


\noindent \textbf{Asymmetric Spectral Initialization.} In order to generate aligned embeddings, we now turn to the following asymmetric spectral initialization.  Provided that $B^{(0)}$ is a polynomial in $\frac{1}{n}XX^T + I$ and $A^{(0)} = \mathbf{0}$, Theorem \ref{theorem: two layer linear autoencoder} guarantees that $B$ stretches inputs along the top directions given by the top left singular vectors of the training data, $X$.  This is formalized in the following corollary.

\begin{corollary}
\label{corollary: Polynomial Stretching} 
Let $f(x) = ABx$ denote a neural network with $A, B \in \mathbb{R}^{d \times d}$ and $x \in \mathbb{R}^{d}$, and let $U_X \Sigma_X V_X^T$ denote the SVD of $X$. Let $A^{(0)} = \mathbf{0}$ and $B^{(0)}$  be a polynomial in $\frac{1}{n}XX^T + I$.  Let $\sigma_i(B^{(t)})$ denote the $i$\textsuperscript{th} singular value of $B^{(t)}$.  Then $B^{(\infty)}$ given by gradient flow satisfies: 
\begin{align*}
   \sigma_i(B^{(\infty)}) \geq \sigma_i(B^{(0)}) \geq 1.
\end{align*}
\end{corollary}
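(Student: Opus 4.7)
The proof is essentially a direct consequence of Theorem \ref{theorem: two layer linear autoencoder} combined with two elementary algebraic facts, so I would organize it as a verification rather than a new computation. First I would check that the hypotheses of Theorem \ref{theorem: two layer linear autoencoder} are satisfied. Since $\frac{1}{n}XX^T + I$ diagonalizes as $U_X \bigl(\frac{1}{n}\Sigma_X^2 + I\bigr) U_X^T$, every polynomial in it is of the form $U_X \Sigma_B^{(0)} U_X^T$, where $\Sigma_B^{(0)}$ is the diagonal matrix obtained by applying the same polynomial to the entries $\frac{\sigma_i(X)^2}{n} + 1$. These entries all lie in $[1,\infty)$, so (interpreting ``polynomial'' with the non-negativity required for spectral initialization, e.g., non-negative coefficients) we obtain $\Sigma_B^{(0)}{}_{i,i} \geq 1$ for every $i$, which already yields the rightmost inequality $\sigma_i(B^{(0)}) \geq 1$. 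The choice $A^{(0)} = \mathbf{0}$ corresponds to $\Sigma_A^{(0)} = \mathbf{0}$, which is trivially a valid spectral initialization.

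Next I would specialize the formula in Theorem \ref{theorem: two layer linear autoencoder} to $\Sigma_A^{(0)} = \mathbf{0}$. Writing $b_i := \Sigma_B^{(0)}{}_{i,i}$, the theorem gives
\begin{equation*}
\Sigma_B^{(\infty)}{}_{i,i} \;=\; \sqrt{\frac{b_i^2}{2} + \sqrt{1 + \frac{b_i^4}{4}}}.
\end{equation*}
Since $B^{(\infty)} = U_X \Sigma_B^{(\infty)} U_X^T$ with $\Sigma_B^{(\infty)}$ having non-negative diagonal entries, the singular values $\sigma_i(B^{(\infty)})$ are exactly the diagonal entries $\Sigma_B^{(\infty)}{}_{i,i}$ (up to reordering), and similarly $\sigma_i(B^{(0)}) = b_i$.

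The remaining step is the inequality $\Sigma_B^{(\infty)}{}_{i,i} \geq b_i$. I would establish it by squaring twice: the claim is equivalent to $\frac{b_i^2}{2} + \sqrt{1 + \frac{b_i^4}{4}} \geq b_i^2$, i.e., $\sqrt{1 + \frac{b_i^4}{4}} \geq \frac{b_i^2}{2}$, which is immediate from $1 + \frac{b_i^4}{4} \geq \frac{b_i^4}{4}$. Combining this with the base bound $b_i \geq 1$ yields the chain $\sigma_i(B^{(\infty)}) \geq \sigma_i(B^{(0)}) \geq 1$.

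There is no real obstacle; the only subtlety worth flagging explicitly is that we need to know that $\Sigma_B^{(0)}$ is entrywise non-negative (so that $b_i$ coincides with a singular value rather than an eigenvalue of potentially mixed sign), and that the polynomial structure preserves the lower bound of $1$ inherited from the spectrum of $\frac{1}{n}XX^T + I$. Both are straightforward under the standard non-negative-coefficient convention implicit in the statement.
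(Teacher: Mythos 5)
Your proof is correct and follows exactly the route the paper intends: the paper gives no separate appendix proof for this corollary, treating it as an immediate specialization of Theorem \ref{theorem: two layer linear autoencoder} to $\Sigma_A^{(0)}=\mathbf{0}$, which is precisely what you carry out. Your explicit check that $\sqrt{1+b_i^4/4}\ge b_i^2/2$ (giving $\sigma_i(B^{(\infty)})\ge\sigma_i(B^{(0)})$, with monotonicity of $b\mapsto\sqrt{b^2/2+\sqrt{1+b^4/4}}$ preserving the ordering of singular values) and your flagged caveat that the polynomial must map the spectrum of $\frac{1}{n}XX^T+I$, which lies in $[1,\infty)$, back into $[1,\infty)$ are both appropriate and, if anything, more careful than the paper's implicit argument.
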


\textbf{Remarks.}  
 From Corollary \ref{corollary: Polynomial Stretching}, we find that $B^{(\infty)}$ after gradient descent from an asymmetric spectral initialization will stretch along the top singular vectors even more than $B^{(0)}$. We note that this property does not necessarily hold for symmetric spectral initialization.  Moreover, we can achieve an arbitrary amount of stretching along the top singular vectors of the data by scaling the degree of the matrix polynomial used to initialize $B^{(0)}$.  Finally, unlike general spectral initialization, our asymmetric initialization above does not require computing any singular vectors, and hence is readily usable in practice.


\subsection{Aligning Latent Spaces with Linear Autoencoders}
Returning to the example of a Gaussian mixture from Section \ref{sec: Stretching}, we now use Theorem \ref{theorem: two layer linear autoencoder} to prove that asymmetrically initialized linear autoencoders can promote alignment. Specifically, we revisit the setting of Corollary \ref{corollary: two gaussians}, which consisted of two Gaussians whose major axes are nearly orthogonal. By generating $B$ from our linear autoencoder, we find that autoencoding will align the major axes of these Gaussians (Proof in Appendix \ref{appendix: two gaussians after autoencoding}).
\begin{corollary}
\label{corollary: two gaussians after autoencoding}
In the setting of Corollary \ref{corollary: two gaussians}, set $B$ to be the encoding matrix of an autoencoder $f(x)=ABx$ after gradient flow where $A$ is initialized to $0$ and $B$ is initialized to $(\frac{1}{n}XX^T)^k$. Then, as n tends to infinity,
$$\cos(Bq_{\Sigma_1}, Bq_{\Sigma_2}) = \frac{2 + \gamma \sqrt{c^2 + 4}}{2\gamma + \sqrt{c^2 + 4}} , \quad \gamma = \frac{(\alpha^2 - \beta^2)}{(\alpha^2 + \beta^2)},$$
where
$$\alpha = \lambda_1(B) = \sqrt{\frac{(c+7)^{2k}}{2} + \sqrt{1 + \frac{(c+7)^{4k}}{4}}}, \;\beta = \lambda_2(B) = \sqrt{\frac{(c+3)^{2k}}{2} + \sqrt{1 + \frac{(c+3)^{4k}}{4}}}.$$
Thus the alignment of $q_1$ and $q_2$ will increase, and this alignment can be arbitrarily boosted by encoding with higher degree $k$.
\end{corollary}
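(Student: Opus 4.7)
The plan is to compose Theorem~\ref{theorem: two layer linear autoencoder} (to identify the singular values of $B^{(\infty)}$) with Corollary~\ref{corollary: two gaussians} (to translate those singular values into a cosine similarity), using the explicit covariance limit that was already worked out in Corollary~\ref{corollary: two gaussians with xxt}.

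First, I would reuse the computation underlying Corollary~\ref{corollary: two gaussians with xxt} showing that as $n\to\infty$, $\frac{1}{n}XX^T \xrightarrow{a.s} \left(\begin{smallmatrix} c+5 & 2 \\ 2 & c+5\end{smallmatrix}\right)$, whose eigenvectors are $u_1=\frac{1}{\sqrt{2}}[1,1]^T$ and $u_2=\frac{1}{\sqrt{2}}[1,-1]^T$ with eigenvalues $c+7$ and $c+3$. These $u_1,u_2$ are therefore the columns of the $U_X$ appearing in the spectral initialization of Definition~\ref{def: Spectral Initialization}, and the initialization $B^{(0)}=(\frac{1}{n}XX^T)^k$ with $A^{(0)}=\mathbf{0}$ corresponds, in Theorem~\ref{theorem: two layer linear autoencoder}'s notation, to $\Sigma_B^{(0)}=\diag{(c+7)^k,(c+3)^k}$ and $\Sigma_A^{(0)}=\mathbf{0}$ in the limit.

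Plugging these diagonals into the closed form for $\Sigma_B^{(\infty)}$ in Theorem~\ref{theorem: two layer linear autoencoder} gives the stated $\alpha$ and $\beta$ verbatim. Because $B^{(\infty)}=U_X\Sigma_B^{(\infty)}U_X^T$ is symmetric with eigenvectors $u_1,u_2$ and the scalar map $x\mapsto\sqrt{x/2+\sqrt{1+x^2/4}}$ is strictly increasing on $[0,\infty)$, one gets $\alpha>\beta>0$ so $\alpha^2>\beta^2$. The hypotheses of Corollary~\ref{corollary: two gaussians} on $B$ are then exactly satisfied, and invoking that corollary immediately delivers both the cosine formula and the inequality $\cos(Bq_{\Sigma_1},Bq_{\Sigma_2})>\cos(q_{\Sigma_1},q_{\Sigma_2})$.

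For the ``arbitrary boost'' claim, I would track $\gamma=(\alpha^2-\beta^2)/(\alpha^2+\beta^2)$ as $k\to\infty$. Since both diagonal entries of $\Sigma_B^{(0)}$ diverge, the closed form is asymptotic to $\Sigma_B^{(\infty)}_{i,i}\sim\Sigma_B^{(0)}_{i,i}$, so $\alpha^2/\beta^2\sim((c+7)/(c+3))^{2k}\to\infty$. Hence $\gamma\to 1$ and the right-hand side of the cosine formula tends to $(2+\sqrt{c^2+4})/(2+\sqrt{c^2+4})=1$. The step that requires the most care is ensuring the $n\to\infty$ a.s.\ limit from Lemma~\ref{lemma: Stretching along means} passes cleanly through the constructions $M\mapsto M^k$ and $M\mapsto\Sigma_B^{(\infty)}$; both are continuous functions of the entries of $\frac{1}{n}XX^T$, so the continuous mapping theorem handles this without extra work.
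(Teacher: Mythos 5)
Your proposal is correct and follows essentially the same route as the paper's proof: apply Theorem~\ref{theorem: two layer linear autoencoder} with $\Sigma_A^{(0)}=0$ and $\Sigma_B^{(0)}=\diag{(c+7)^k,(c+3)^k}$, pass the $n\to\infty$ limit through via the continuous mapping theorem, and invoke the cosine formula of Corollary~\ref{corollary: two gaussians}. The only differences are cosmetic refinements on your part (explicitly checking $\alpha^2>\beta^2$ via monotonicity of the spectral map, and the $k\to\infty$ asymptotics $\gamma\to 1$ for the ``arbitrary boost'' claim), which the paper leaves implicit.
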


Since autoencoders with asymmetrical spectral initialization stretch along the top directions of the dataset, we can use them to boost  alignment while allowing for perfect reconstruction via decoding. In Section \ref{sec: Applications}, we explore how linear autoencoders can be used to align real-world datasets. 



\subsection{Stretching in Over-parameterized Nonlinear ReLU Autoencoders}
In the previous section, we proved that asymmetrically initialized linear over-parameterized autoencoders stretch along the top left singular vectors of the data.  We now provide a proof that the same is true for asymmetrically initialized over-parameterized ReLU autoencoders in a simplified setting with two opposing datapoints. 
We use gradient flow to compute the closed forms of  $A^{(\infty)}, B^{(\infty)}$. Since the dataset $[-x, x]$ has exactly one singular vector with non-zero singular value (i.e $x$), we compute the stretching factor as the magnitude of the new embedding of $x$ (i.e $||Bx||_2$).

\begin{theorem}
Let $f(x) = A\phi(Bx)$ be a neural network with $A \in \mathbb{R}^{d \times 2d}, B \in \mathbb{R}^{2d \times d}$, and $\phi$ as a ReLU activation. For some $x \in \mathbb{R}^{d}$ with $\eta = ||x||_2^2$, let $X = \{x, -x\}$ denote the training set. 
Let $A$ and $B$ be initialized as
$$A^{(0)} = 0, \;B^{(0)} = \begin{bmatrix} xx^T \\ -xx^T\end{bmatrix}.$$
Then after using gradient flow, we have that $A^{(\infty)} = \alpha  (B^{(0)})^T$ and  $B^{(\infty)} = \beta B^{(0)}$, where
$$\beta^2 = \frac{1}{2} \left(\frac{\sqrt{\eta^4 + 4}}{\eta^2} +1\right), \;\alpha \beta = \frac{1}{\eta}.$$
Moreover, gradient flow will stretch the embedding of $x$ by a quadratic factor of $\eta$:
$$ ||Bx||_2^2 = \eta^2 \left(\frac{\sqrt{\eta^4 + 4}}{\eta^2} +1\right) \|x\|^2.$$
\label{theorem:non_linear}
\end{theorem}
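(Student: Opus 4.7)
The plan is to identify a two-scalar invariant manifold of the gradient flow, derive the induced coupled ODE on that manifold, and solve it by exploiting a conservation law. I would adopt the scalar ansatz $A^{(t)} = \alpha(t)(B^{(0)})^T$ and $B^{(t)} = \beta(t) B^{(0)}$, with $\alpha(0)=0$, $\beta(0)=1$. The key enabler is the componentwise ReLU identity $\phi(x)-\phi(-x)=x$, which makes the nonlinearity effectively linear whenever contributions from the antipodal training set $\{x,-x\}$ are combined. Indeed, $B^{(0)}x = \eta\begin{bmatrix} x\\ -x \end{bmatrix}$, so for $\beta>0$ we have $\phi(\beta B^{(0)}x) = \beta\eta\begin{bmatrix}\phi(x) \\ \phi(-x)\end{bmatrix}$, and therefore
\[A\phi(Bx) = \alpha\beta\eta\cdot xx^T\bigl(\phi(x)-\phi(-x)\bigr) = \alpha\beta\eta^2 x,\]
with the symmetric calculation giving $A\phi(-Bx) = -\alpha\beta\eta^2 x$. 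Thus the residuals on $\pm x$ are both of the form $\pm(1-\alpha\beta\eta^2)x$.

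To verify ansatz preservation, I would compute $\nabla_A\mathcal{L}$ and $\nabla_B\mathcal{L}$ at a general ansatz point. Summing the two outer-product contributions to $\nabla_A\mathcal{L}$ and applying $\phi(x)-\phi(-x)=x$ once more collapses them to a scalar multiple of $[\,xx^T \mid -xx^T\,] = (B^{(0)})^T$. For $\nabla_B\mathcal{L}$, I would exploit the identity $\phi'(v)\odot v = \phi(v)$ to rewrite $A^T r_\pm \odot \phi'(\pm Bx)$ as a scalar multiple of $\phi(\pm Bx)$; multiplying by $\pm x^T$ from the right and summing the two contributions then reduces to a scalar multiple of $\begin{bmatrix}xx^T \\ -xx^T\end{bmatrix} = B^{(0)}$. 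This yields the coupled ODE
\[\dot\alpha = 2\beta\eta(1-\alpha\beta\eta^2), \qquad \dot\beta = 2\alpha\eta(1-\alpha\beta\eta^2),\]
which is tangent to the scalar manifold and locally Lipschitz (since $\beta>0$ keeps activation patterns fixed), confirming invariance.

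A direct computation gives the conservation law $\tfrac{d}{dt}(\beta^2-\alpha^2)\equiv 0$, so $\beta^2-\alpha^2=1$ throughout. The loss $2\eta(1-\alpha\beta\eta^2)^2$ acts as a strict Lyapunov function, and both $\dot\alpha,\dot\beta>0$ whenever $\alpha\beta\eta^2<1$, so the trajectory stays in the compact region where $0\le\alpha\beta\eta^2\le 1$ and monotonically approaches the equilibrium curve $\alpha\beta\eta^2=1$ from $(0,1)$. Intersecting with $\beta^2-\alpha^2=1$ reduces to a biquadratic in $\beta^2$ whose positive root is $\beta^2=\tfrac{1}{2}\bigl(1+\sqrt{\eta^4+4}/\eta^2\bigr)$, and the stretching formula follows from $\|B^{(\infty)}x\|_2^2 = \beta^2 \|B^{(0)}x\|_2^2 = 2\beta^2\eta^2\|x\|_2^2$.

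The main obstacle is the ansatz-preservation step: a priori the ReLU could inject gradient components outside the span of $(B^{(0)})^T$ and $B^{(0)}$, in which case the two-scalar parameterization would not be self-consistent. The resolution is purely algebraic: the identities $\phi(x)-\phi(-x)=x$ and $\phi'(v)\odot v=\phi(v)$, combined with the antipodal training set and the rank-one block structure of $B^{(0)}$, force every ReLU-mediated term to recombine into a scalar multiple of the relevant initialization matrix. Once this is in hand, the rest is routine ODE analysis.
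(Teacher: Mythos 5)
Your proof is correct and follows essentially the same route as the paper's: reduce to the rank-one, two-scalar invariant manifold via the identities $\phi(z)-\phi(-z)=z$ and $\phi'(z)\odot z=\phi(z)$, derive the balancedness conservation law $\beta^2-\alpha^2=1$, impose the interpolation/equilibrium condition, and solve the resulting biquadratic for $\beta^2$. (Your equilibrium condition $\alpha\beta=1/\eta^2$ agrees with the paper's own appendix derivation; the $\alpha\beta=1/\eta$ in the theorem statement appears to be a typo.)
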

The full proof can be found in Appendix \ref{appendix: non_linear}, but proceeds through gradient flow analysis similar to Theorem \ref{theorem: two layer linear autoencoder}. As in the linear setting, we find that by asymmetrically initializing $A$ and $B$, we can stretch the embedding of the dataset along the top left singular vector.

\section{Applications}
\label{sec: Applications}
Thus far, we have demonstrated that autoencoders can align directions of interest in the latent space by stretching along the left singular vectors of the data.  While we primarily focused on proving this for the case of a Gaussian mixture model, we now apply our theoretical results to produce autoencoders that align semantic/syntactic directions in word embedding space and drug signatures in gene expression space.  

\subsection{Alignment of GloVe Embeddings}
\begin{figure}[t]
    \centering
    \begin{subfigure}{0.45\textwidth}
     \includegraphics[width=\textwidth]{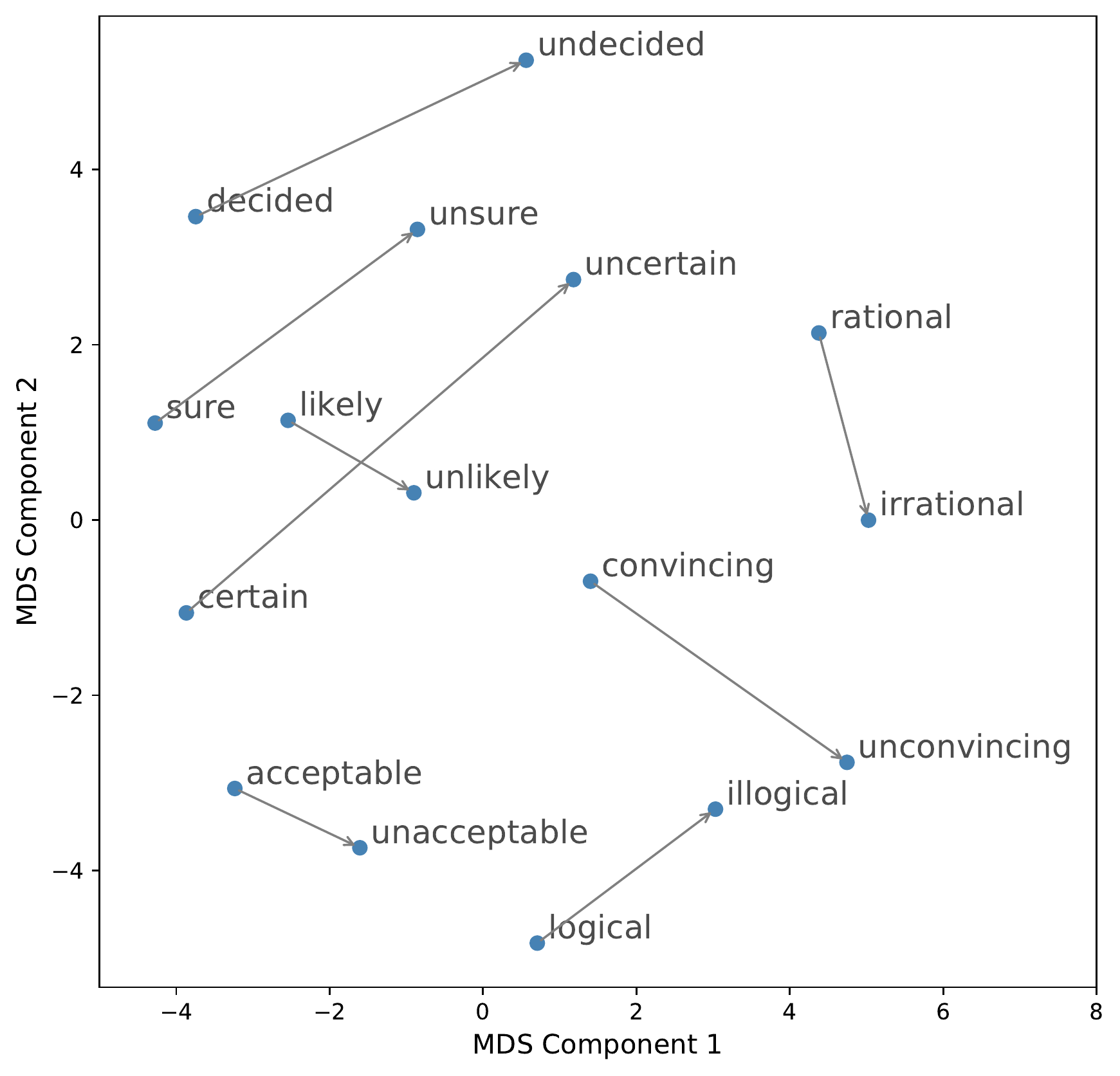}
     \caption{Original Space}
    \end{subfigure}
    \begin{subfigure}{0.45\textwidth}
     \includegraphics[width=\textwidth]{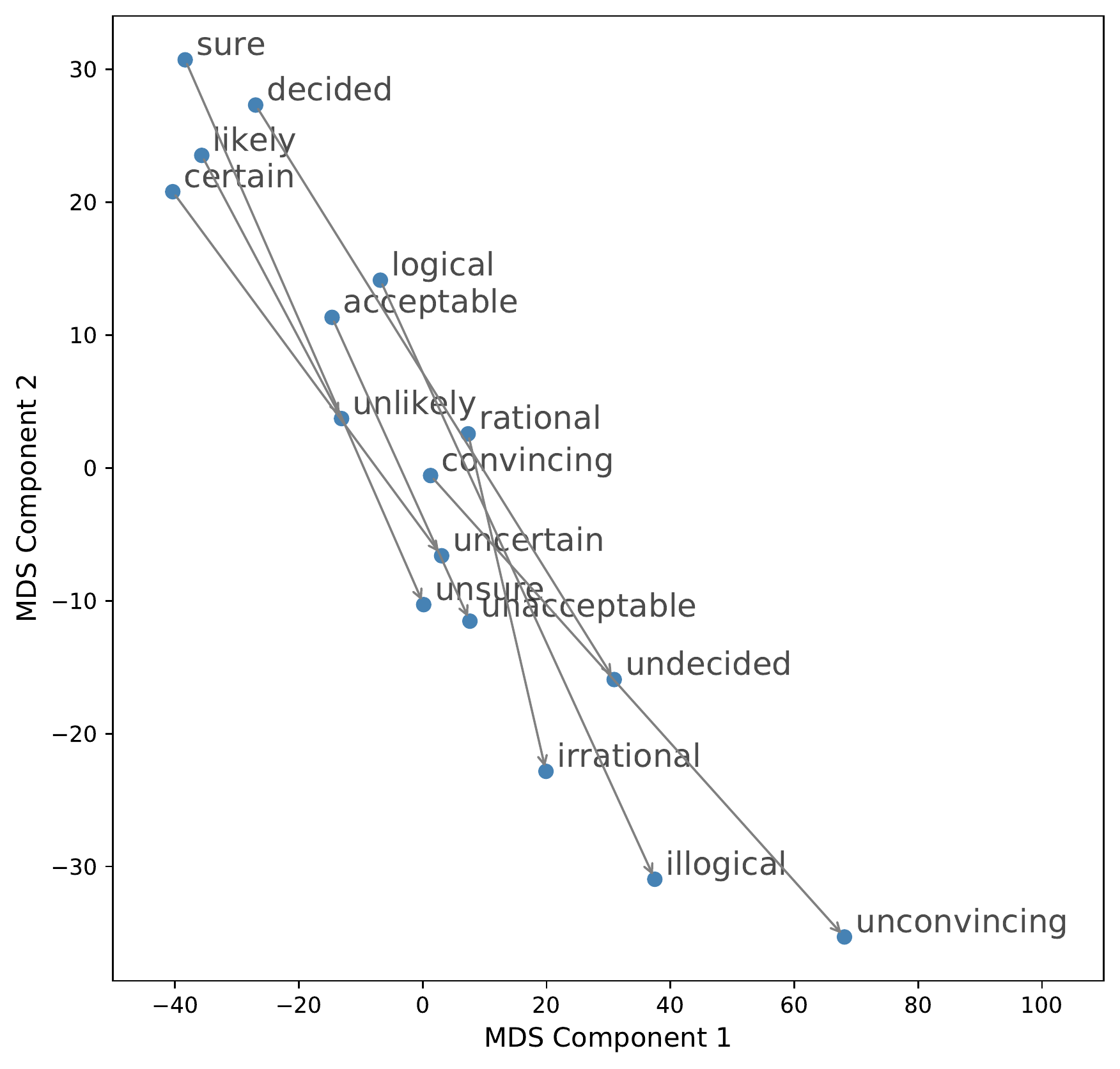}
     \caption{Linear Autoencoder (k=2)}
    \end{subfigure}
    \caption{We visualize the embeddings of several pairs of words with a negation relationship using MDS both in the original space and linearly autoencoded space (k=2). The linearly autoencoded space aligns the word pairs to point in the same direction.}
    \label{fig:negation_alignment}
\end{figure}

In this section, we apply our theory to align semantic/syntactic directions in GloVe word embeddings \cite{pennington2014glove}. We use 300 dimensional GloVe vectors that were trained on Common Crawl with 840 billion tokens. We consider the embeddings of the 500,000 most frequent words, and encode these embeddings with linear and non-linear autoencoders. Details about our experimental setup and training procedures can be found in Appendix \ref{appendix: experimental_setup}.

\subsubsection{Alignment of Meaningful Directions}
We first consider how autoencoders can align meaningful directions in the GloVe embedding data. We consider word relationships from the Google analogy dataset \cite{mikolov2013linguistic}, which consists of 19,544 question pairs over 14 types of relationships. Such relationships include semantic directions (such as man $\rightarrow$ woman) and syntactic directions (such as cat $\rightarrow$ cats). 

We study how stretching the top singular vectors of the data through autoencoders can align these directions. Prior work suggests that the top singular vectors of GloVe data are closely related to the frequency of a word \cite{mu2017all}. We confirm this finding by plotting, for word embedding $a$, the frequency ranking of that word against its cosine similarity with the singular vector $u_1$ (Appendix \ref{appendix:glove_frequency}). Recall from Lemma \ref{lemma: Alignment Intuition} that, to boost alignment, we would like the the stretching axis to be roughly equally aligned with the directions of interest. Since we consider relationships between words which are independent of word frequency, we suspect that stretching along the top singular vectors of the GloVe will allow us to align word relationships in the data. 

To evaluate alignment in these directions, we randomly draw two word pairs $(a_1, b_1)$ and $(a_2, b_2)$ which capture the same relationship.  We then plot the cosine distances of $b_1 - a_1$ and $b_2 - a_2$ before and after autoencoding (Appendix \ref{appendix: glove_alignment}). We find that autoencoding with either a linear or nonlinear autoencoder boosts alignment of these semantic directions in the latent space.

We can further visualize this alignment by visualizing the latent space before and after autoencoding with multidimensional scaling (MDS). In Figure \ref{fig:negation_alignment}, we visualize the alignment of a subset of the pairs in the relationship ``negation" (e.g aware $\rightarrow$ unaware). We find that autoencoding with both linear and non-linear autoencoders is able to align the direction of negation. We depict the full MDS visualization for negation and demonstrate alignment of other relationships in Appendix \ref{appendix:glove_mds}.

\renewcommand{\arraystretch}{0.8}
\begin{table}[t]
    \centering
    \caption{Analogy and word similarity metrics for GloVe data before and after autoencoding. Autoencoding can improve analogy scores without causing a significant degradation in word similarity. }
    \begin{tabular}{l|cc|cc}
    \toprule
        Model & \begin{tabular}{c}\texttt{wsim}\\(MTurk-771)\end{tabular} & \begin{tabular}{c}\texttt{wsim}\\(MEN)\end{tabular}& \begin{tabular}{c}\texttt{analogy}\\(Google)\end{tabular} & \begin{tabular}{c}\texttt{analogy-accuracy}\\(Google)\end{tabular}  \\\midrule
        Original  &   \textbf{0.629} &  \textbf{0.739} & 0.417 & 0.338\\
        Linear AE (k=1) &  0.628 & 0.738 & 0.512 &\textbf{0.382}\\
        Linear AE (k=2) & 0.611 & 0.729 & \textbf{0.656} & 0.322 \\
        Leaky ReLU AE & 0.592 &  0.698 &  0.509 & 0.370\\
        \bottomrule
    \end{tabular}
    \label{tab:glove_metrics}
\end{table}

\subsubsection{Word Analogy Metrics}
To evaluate alignment on a more quantitative level, we study the effect of autoencoding on word analogy tasks. We consider a list of 4-tuples of the form $(a_1, b_1, a_2, b_2)$ which have the analogy relationship $a_1$:$b_1$::$a_2$:$b_2$. One such example might be the embeddings of (men, women, king, queen). 

We evaluate two metrics for word analogy alignment. In the first, \texttt{analogy}, we simply calculate the average cosine similarity $\cos(a_2-a_1, b_2-b_1)$. In the second, \texttt{analogy-accuracy}, we evaluate whether $b_2 = \text{argmax}_{b^*} \cos(b^*, a_2 - a_1 + b_1)$ following \cite{levy2014linguistic}. We form the 4-tuples by combining every possible pair within each relationship in the Google Analogy dataset \cite{mikolov2013linguistic}. 

In Table \ref{tab:glove_metrics}, we find that autoencoding substantially boosts the  \texttt{analogy} metric, as semantic directions become more aligned. Intriguingly, we can boost alignment beyond a nonlinear autoencoder by simply using a linear autoencoder with a higher degree $k$. Moreover, we find that encoding the GloVe dataset with a smaller degree linear autoencoder also boosts accuracy on the analogy task.

\subsection{Word Similarities}
In the previous subsections, we showed that autoencoding word embeddings can align meaningful linguistic directions. Word locality is another important property of word embedding spaces:  similar words should have similar representations. In this section, we show that aligning the space through autoencoding does not significantly degrade the locality of words in the latent space. 

Word similarity datasets are comprised of word pairs along with human annotated ratings of the words' relatedness. We compute the similarity of two words as the cosine similarity between the words' embeddings. Following \cite{chung2017learning}, we evaluate the metric \texttt{wsim} as the Spearman's rank correlation between the rankings computed via cosine similarity and the rankings from the human annotation. 
We use two word-similarity datasets:  Mturk-771 \cite{halawi2012large}, which rates 771 word pairs on a scale of 0 to 5, and MEN \cite{bruni2014multimodal}, which rates 3000 word pairs on a scale of 0 to 50. 

\paragraph*{Reference Point Correction} Word similarity metrics typically compute the cosine similarity of individual embeddings with respect to the origin $o = \mathbf{0}$. However, since autoencoding stretches along the top singular vector, this origin is a poor reference point since it does not stretch along with the rest of the dataset. Instead, we choose a reference point that will also stretch alongside the top principal component: for the original GloVe dataset $X$, we set the reference point to be $o = \lambda_1 u_1$, where $\lambda_1, u_1$ are the top left singular value and vector of $X$. Further considerations on this correction can be found in Appendix \ref{appendix: reference_point_correction}. 

In Table \ref{tab:glove_metrics}, we find that the word similarity metrics remain largely stable after autoencoding, indicating that alignment does not significantly warp the latent space. Thus, autoencoding can align GloVe data without losing important properties such as word locality.

\subsection{Drug Signature Alignment in CMap}

Finally, we apply our theoretical results to gene expression data from CMap \cite{CMap} to boost the alignment of drug signatures, i.e the difference in gene expression for a cell type at control and after applying the drug. Prior work \cite{COVIDAutoencoding} demonstrated that 1-hidden layer over-parameterized autoencoders with Leaky ReLU \cite{LeakyReLU} activation produced pre-activation latent spaces which aligned drug signatures across cell types.  The boost in alignment was evaluated by plotting the cosine similarity of drug signatures across cell types in the original and the autoencoded latent space.  

Following the setting of \cite{COVIDAutoencoding}, we measure the alignment of drug signatures between MCF7 and A549 cell types using linear and nonlinear autoencoders (Figure \ref{fig: Drug Signature Alignment}). We find that linear autoencoders with asymmetric spectral initialization can boost alignment even beyond the alignment given by the Leaky ReLU autoencoder in \cite{COVIDAutoencoding}. Moreover, using a high degree polynomial for asymmetric initialization (i.e., degree 30 in Figure~\ref{fig: Drug Signature Alignment}c) leads to all drug signatures being nearly perfectly aligned.  



\begin{figure}
    \captionsetup[subfigure]{justification=centering}   
    \centering
    \begin{subfigure}{0.32\textwidth}
    \includegraphics[width=\textwidth]{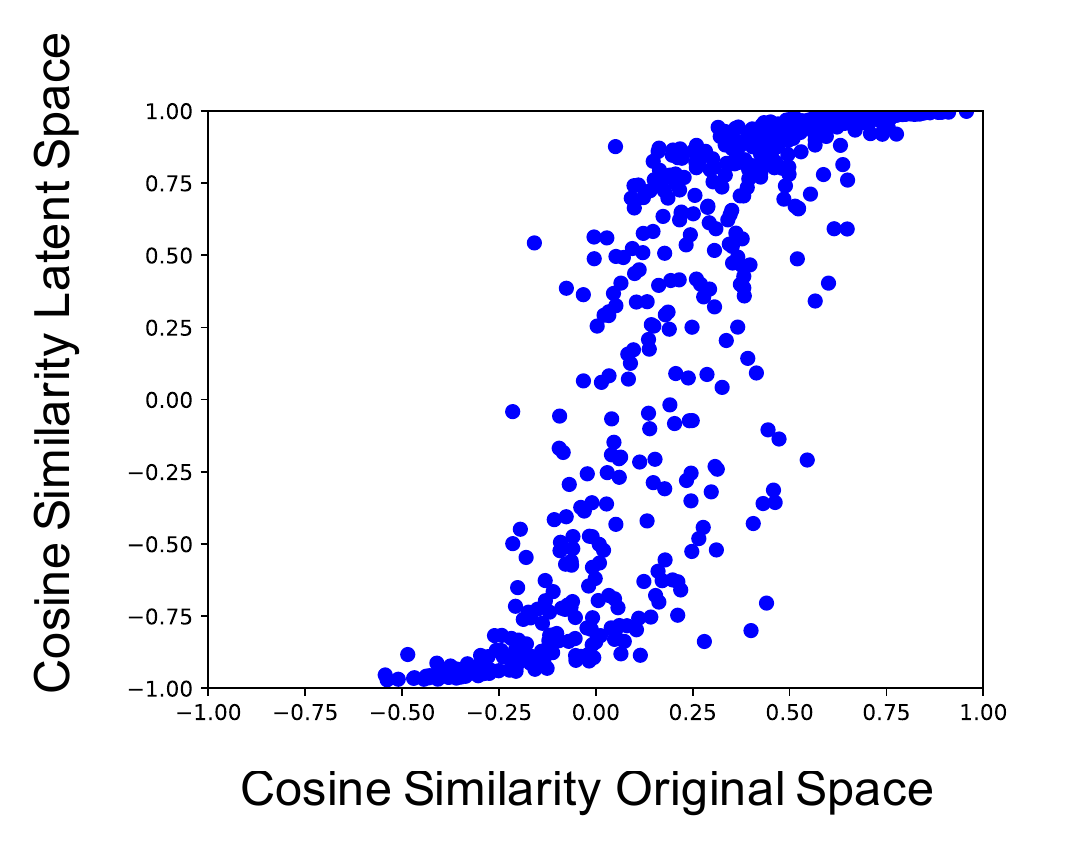}
    \centering
    \caption{Leaky ReLU Autoencoder}
    \end{subfigure}
    \begin{subfigure}{0.32\textwidth}
    \centering
    \includegraphics[width=\textwidth]{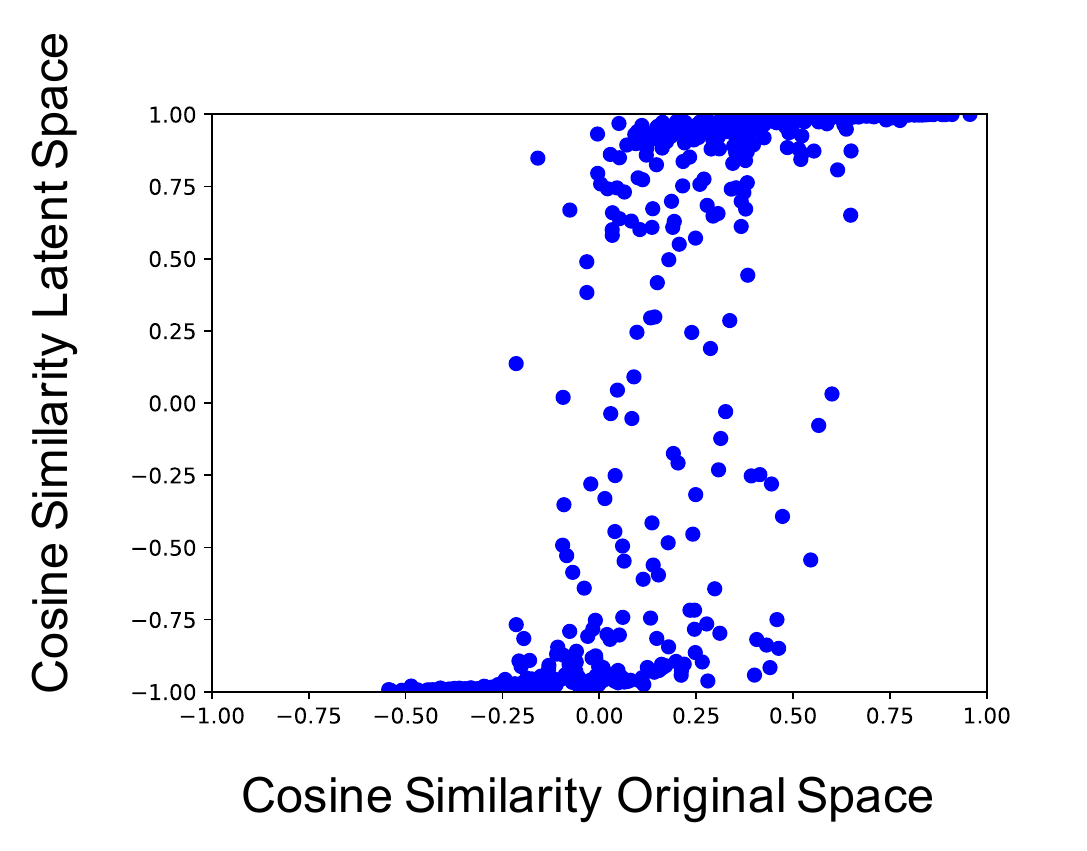}
    \caption{Linear Autoencoder (k=10)}
    \end{subfigure}
    \begin{subfigure}{0.32\textwidth}
    \centering
    \includegraphics[width=\textwidth]{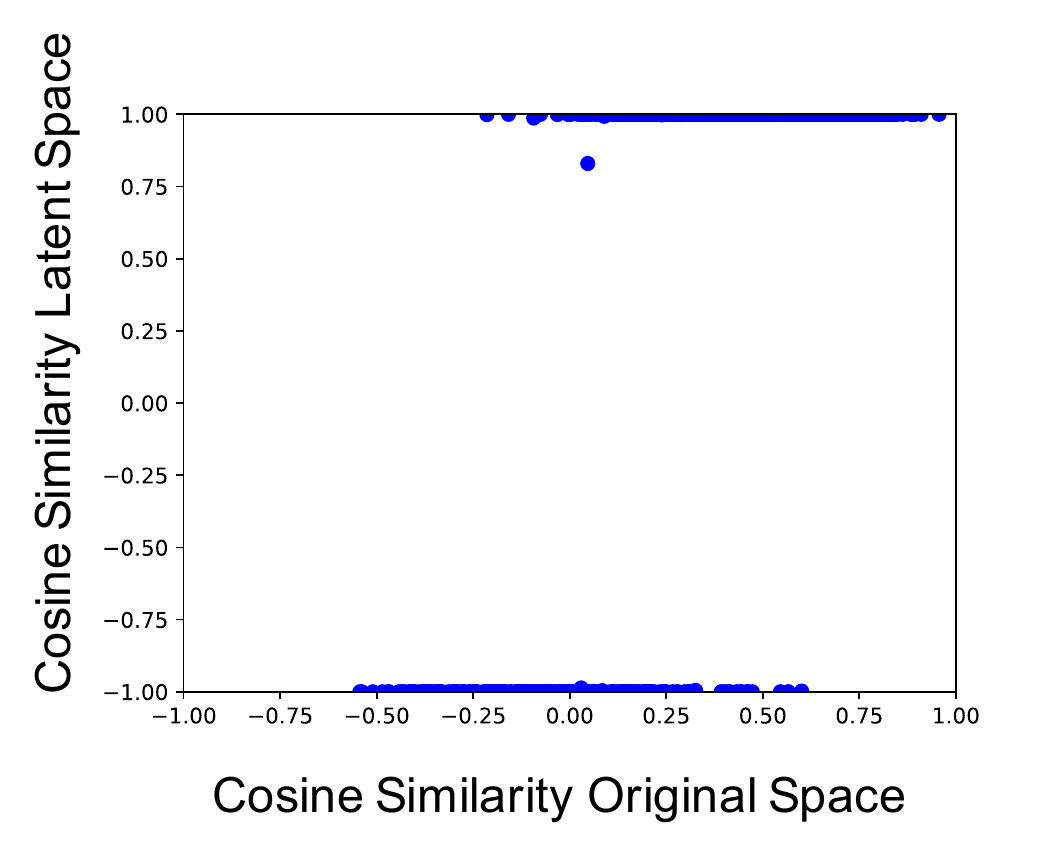}
    \caption{Linear Autoencoder (k=30)}
    \end{subfigure}
    \caption{Alignment of drug signatures across 2 cell types (MCF7 and A549) using (a) LeakyReLU autoencoders from \cite{COVIDAutoencoding} and (b,c) linear autoencoders with asymmetric spectral initialization. Using linear autoencoders can lead to greater alignment than using a LeakyReLU autoencoder.}
    \label{fig: Drug Signature Alignment}
\end{figure}


\section{Conclusion}
\label{sec: Conclusion}
In this work, we analyzed a mechanism through which autoencoders with asymmetric spectral initialization  produce aligned latent spaces by stretching along the top singular vectors of the dataset. After fully characterizing the degree of stretching in linear autoencoders, we showed that we can arbitrarily stretch along the top directions by adjusting the initialization.  We further replicated this behavior for nonlinear autoencoders in a simplified setting. We lastly applied our theoretical results to develop autoencoders that align linguistic shifts in word embedding spaces and drug signatures in gene expression data.  

 While we analyzed autoencoders initialized using asymmetric spectral initialization, \cite{COVIDAutoencoding} observed that even the default initialization scheme from PyTorch \cite{PyTorch} can empirically lead to alignment.  An interesting direction of future work would be to provide theoretical guarantees around what initialization regimes in linear and non-linear settings guarantee alignment of semantically meaningful directions after gradient descent. Another interesting avenue would be to characterize alignment of autoencoders in infinite width settings to facilitate analysis of more complex autoencoders. While neural tangent  kernel (NTK) analyses are not readily applicable for understanding latent space analysis, other parameterizations that admit feature learning (such as the $\mu P$ parameterization \cite{GregYangFeatureLearning}) may be more helpful for understanding alignment.


\section{Acknowledgements}
This work was partially supported by NSF (DMS-1651995), ONR (N00014-17-1-2147 and N00014-18-1-2765), the MIT-IBM Watson AI Lab, and a Simons Investigator Award to C. Uhler. The Titan Xp used for this research was donated by the NVIDIA Corporation.   

\bibliographystyle{plain}
\bibliography{references}


\newpage

\appendix
\section{Proof of Lemma \ref{lemma: Alignment Intuition}: Alignment of linearly embedded datasets}
\label{appendix: Alignment Intuition}
\begin{proof}
We note that:
\begin{align*}
    (Bq_1)^T(Bq_2) &= q_1^T B^T B q_2 \\
    &= q_1^T V \Sigma B^2 V^T q_2\\
    &= \sum_{i=1}^d \sigma_i^2 (q_1^Tv_1) (q_2^Tv_2)\\
    &= \sum_{i=1}^d \sigma_i^2 \cos \theta_i^{(1)} \cos \theta_i^{(2)}
    ||Bq_j||_2 = \sqrt{q_j^T B^T B q_j}\\
    &= \sqrt{\sum_{i=1}^d  \sigma_i^2 (\cos \theta_i^{(j)})^2}.
\end{align*}
We thus have:
$$\cos(Bq_1, Bq_2)= \frac{(Bq_1)^T Bq_2}{\|Bq_1\|_2\|B q_2\|_2} = \frac{
    \sum_{i=1}^d \sigma_i^2 \cos (q_1, v_i) \cos(q_2, v_i)
    }{\sqrt{
        \left(\sum_{i=1}^d \sigma_i^2 (\cos (q_1, v_i)^2\right)
        \left(\sum_{i=1}^d \sigma_i^2 (\cos (q_2, v_i)^2\right)
    }}.$$
\end{proof}

\newpage 

\section{Proof of Lemma \ref{lemma: Stretching along means}: Stretching Lemmas}
\label{appendix: stretching_lemma}
\begin{proof}
Let $x_i^{(j)}$ be the the ith example from $X_1$ and $x_i^{(2)}$ be the ith example from $X_2$. Then we note that 
$$\frac{1}{n}XX^T = \frac{1}{n}\sum_{i=1}^n \begin{bmatrix} x_i^{(1)} & x_i^{(2)} ... & x_i^{(m)}\end{bmatrix}  \begin{bmatrix} x_i^{(1)} & x_i^{(2)} ... & x_i^{(m)}\end{bmatrix}^T.$$

Let $x = \begin{bmatrix} x^{(1)} &  x^{(2)}...& x^{(m)}\end{bmatrix}, x\in \mathcal{R}^{d \times m}$ where $x_i \sim \mathcal{N}(\mu_i, \Sigma_i)$. Then by the strong law of large numbers, as n tends to infinity, we have that:
$$B^{(0)} = \frac{1}{n}XX^T \xrightarrow{a.s} \EE[xx^T].$$

Furthermore
\begin{align*}
    \EE[xx^T] &= \sum_{i=1}^m \EE[x^{(i)}x^{(i)^T}]\\
    &= \sum_{i=1}^m \Sigma_i + \mu_i\mu_i^T.
\end{align*}
Moreover $\EE[xx^T] = \EE[\frac{1}{n}XX^T]$. We can then plug in the relevant values for $\Sigma_i$:
\begin{align*}
     \EE[xx^T] &= \sum_{i=1}^m D_i + \mu_i \sum_{j=1}^m \mu_j^T\\
     &=  cI + \left(\sum_{j=1}^m \mu_j\right)\left(\sum_{j=1}^m \mu_j\right)^T.
\end{align*}

\end{proof}
\newpage

\section{Proof of Corollary \ref{corollary: two gaussians}: Alignment of 2D Gaussians}
\label{appendix: two gaussians}

\begin{proof}
For our particular setting, from Lemma \ref{lemma: Stretching along means}, we know that:
$$\EE[\frac{1}{n}XX^T] = I + 2 J = (c + 3) I + 4\hat{v}\hat{v}^T$$
with eigenvectors $\hat{v} = \frac{[1,1]^T}{\sqrt{2}}$ and $\hat{u} = [1, -1]^T/\sqrt{2}$ with eigenvalues:
$\lambda_1 = (c + 7), \lambda_2 = (c+3)$. 

Since $B$ has the same eigenvectors as $\EE[XX^T]$ we have that $B$ can be expressed as:
$$U_X = \begin{bmatrix} \hat{v} & \hat{u} \end{bmatrix} = \frac{1}{\sqrt{2}}\begin{bmatrix}1 & 1 \\ 1 & -1\end{bmatrix}, \quad \Sigma_B =\begin{bmatrix} \alpha &  0 \\ 0 & \beta \end{bmatrix}, \quad B = U_X \Sigma_B U_X^T.$$
Simplifying our expression for $B$ gives us the following form:
$$B = \frac{1}{2}(\alpha J + \beta \tilde{J}), \quad \text{for } \tilde{J} = \begin{bmatrix} 1 & -1 \\ -1 & 1 \end{bmatrix}.$$

We want to see how the angle between the principal directions of the original (centered) datasets changes after embedding. We note that the top unnormalized eigenvectors of the two centered datasets are:
$$q_{\Sigma_1} = \begin{bmatrix} 2 \\ c + \sqrt{c^2 + 4}\end{bmatrix},
q_{\Sigma_2} = \begin{bmatrix} 2 \\ -c + \sqrt{c^2 + 4}\end{bmatrix}.$$

The cosine distance between these two directions is:
$$ \cos(q_{\Sigma_1}, q_{\Sigma_2}) =  \frac{2}{\sqrt{c^2 + 4}}.$$

The cosine similarity of the expected embeddings of these two directions is (after a fair amount of arithmetic):
\begin{align*}
   \cos(Bq_{\Sigma_1}, Bq_{\Sigma_2}) &= \frac{2 + \gamma \sqrt{c^2 + 4}}{2\gamma + \sqrt{c^2 + 4}} , \quad \gamma = \frac{(\alpha^2 - \beta^2)}{(\alpha^2 + \beta^2)}.
\end{align*}

Moreover note that since $\gamma > 0$, we have that 
$$ \cos(Bq_{\Sigma_1}, Bq_{\Sigma_2})  > \cos(q_{\Sigma_1}, q_{\Sigma_2}).$$ 
\end{proof}



\newpage 

\section{Proof of Corollary \ref{corollary: two gaussians with xxt}: Alignment of 2D Gaussians using $B=XX^T$}
\label{appendis: two gaussians with xxt}
\begin{proof}
Let $\bar{B} = \EE[\frac{1}{n}XX^T]^k$. From Lemma  \ref{lemma: Stretching along means}, we know that 
$$\EE[\frac{1}{n}XX^T] = I + 2 J = (c + 3) I + 4\hat{v}\hat{v}^T$$
with eigenvectors $\hat{v} = \frac{[1,1]^T}{\sqrt{2}}$ and $\hat{u} = [1, -1]^T/\sqrt{2}$ with eigenvalues:
$\lambda_1 = (c + 7), \lambda_2 = (c+3)$.
Then $\bar{B} = \EE[\frac{1}{n}XX^T]^k$ has the exact same eigenvectors, but has eigenvalues $\lambda_1 = (c+7)^k$ and $\lambda_2 = (c+3)^k$.

From Corollary \ref{corollary: two gaussians}, we know that the cosine distance of $q_{\Sigma_1}$ and $q_{\Sigma_2}$ after embedding with $\bar{B}$ will be  
$$\cos(\bar{B}q_{\Sigma_1}, \bar{B}q_{\Sigma_2})  = \frac{2 + \gamma \sqrt{c^2 + 4}}{2\gamma + \sqrt{c^2 + 4}} , \quad \gamma = \frac{(\alpha^2 - \beta^2)}{(\alpha^2 + \beta^2)}$$
for $\alpha = (c+7)^k, \beta = (c+3)^k$.

From Lemma  \ref{lemma: Stretching along means}, we know that $\frac{1}{n}XX^T \xrightarrow{a.s} \EE[\frac{1}{n}XX^T]$. Then, by the continuous mapping theorem, we have that $B=(\frac{1}{n}XX^T)^k \xrightarrow{a.s} (\EE[\frac{1}{n}XX^T])^k$. Therefore $B$ converges a.s to $\bar{B}$. Since computing the cosine similarity between $Bq_{\Sigma_1}$ and $Bq_{\Sigma_2}$ given $B$ is a continuous mapping, we have that 
$$\cos(Bq_{\Sigma_1}, Bq_{\Sigma_2}) \xrightarrow{a.s} \cos(\bar{B}q_{\Sigma_1}, \bar{B}q_{\Sigma_2}) = \frac{2 + \gamma \sqrt{c^2 + 4}}{2\gamma + \sqrt{c^2 + 4}} , \quad \gamma = \frac{(\alpha^2 - \beta^2)}{(\alpha^2 + \beta^2)}$$
for $\alpha = (c+7)^k, \beta = (c+3)^k$.

Moreover, as $k$ increases, $\gamma$ also increases, which means $\cos(Bq_{\Sigma_1}, Bq_{\Sigma_2})$ will go up and the two directions will become more aligned. 
We empirically verify this claim with several values of $c$, and find that our theoretical result holds up to the third decimal place.
\end{proof}

An empirical verifiction of Corollary \ref{corollary: two gaussians with xxt} is presented in Figure \ref{fig: Verification of Translation Alignment}.  For generating these plots, we chose $c = -.9$ and $k=35$\footnote{We used a scale factor of $\frac{1}{2.25}$ to control the norm of the latent dimensions for visualization.}.  

\begin{figure}
    \centering
    \begin{subfigure}{0.48\textwidth}
    \centering
    \includegraphics[width=\textwidth]{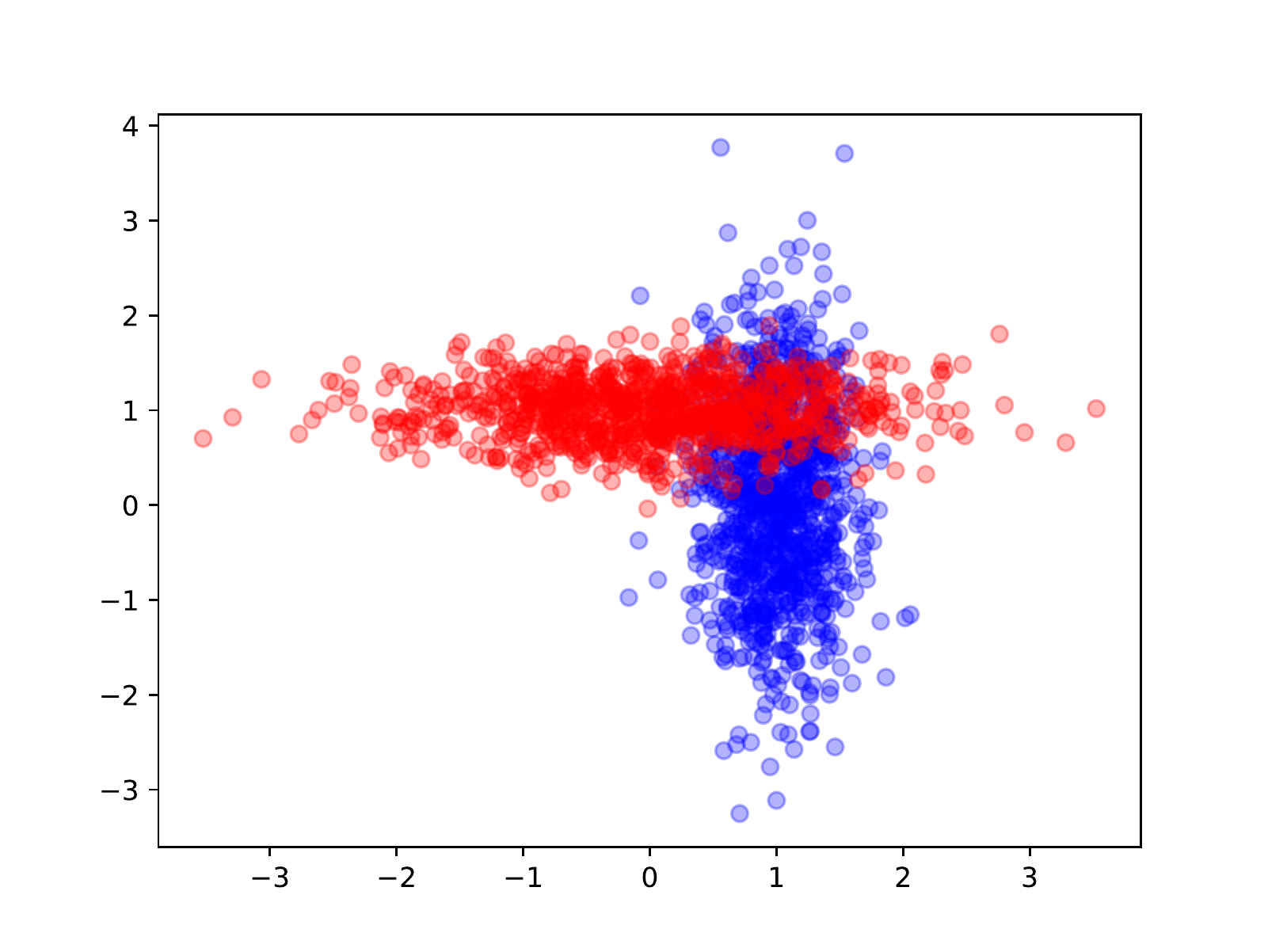}
    \caption{Original Space}
    \end{subfigure}
    \begin{subfigure}{0.48\textwidth}
    \centering
    \includegraphics[width=\textwidth]{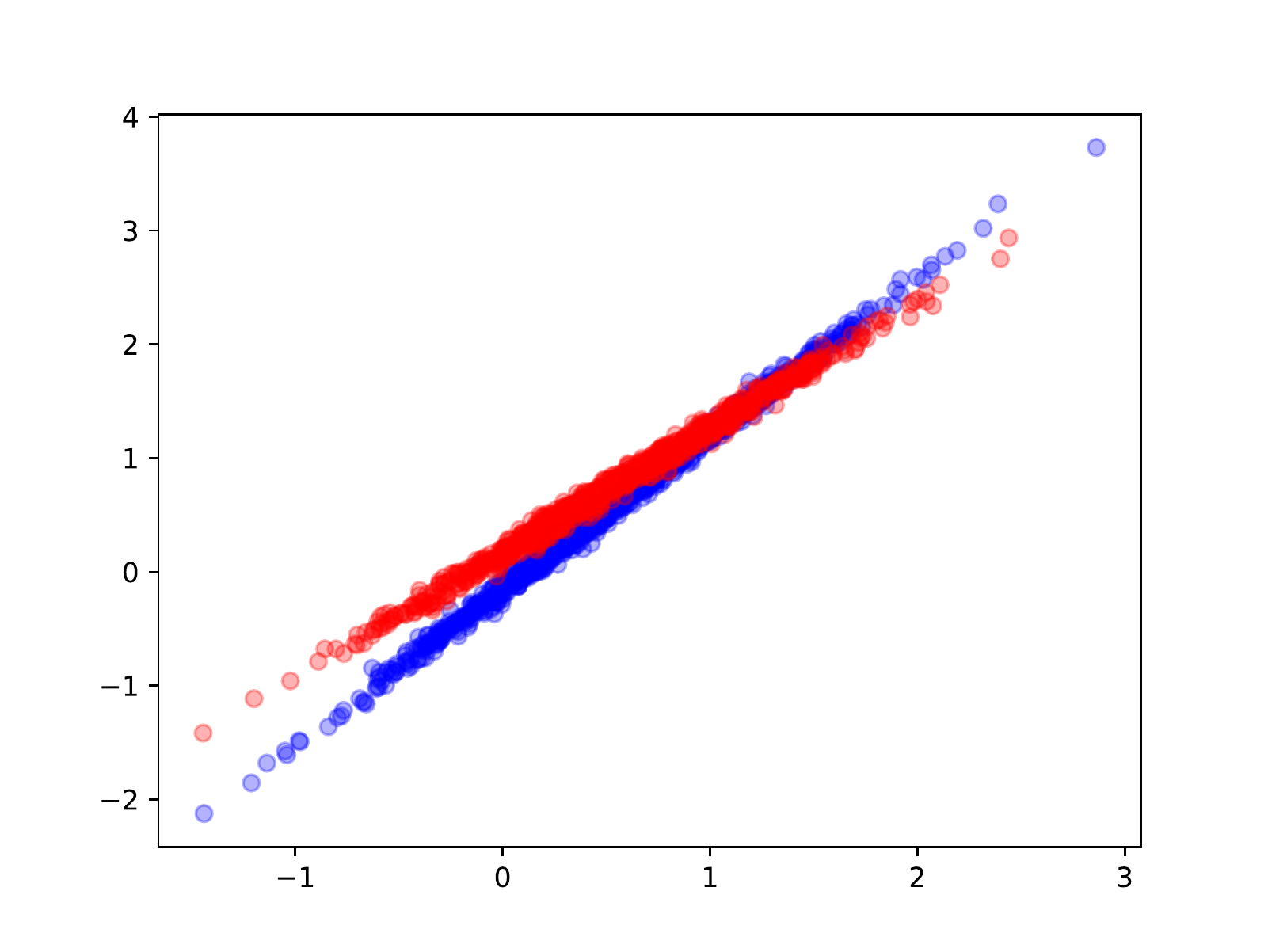}
    \caption{Latent Space}
    \end{subfigure}
    \caption{A verification of the theoretical results from Corollary \ref{corollary: two gaussians with xxt}. (a) Gaussian matrices that have covariances with nearly orthogonal top eigenvectors. (b) After stretching along the sum of the means, we align the major axes of the Gaussians.}
    \label{fig: Verification of Translation Alignment}
\end{figure}

\newpage 

\section{Proof of Theorem \ref{theorem: two layer linear autoencoder}: Gradient Flow Analysis for Linear Autoencoders}
\label{appendix: Gradient flow solution linear autoencoders}
\begin{proof}
Gradient descent using learning rate $\gamma$ proceeds as follows: 
\begin{align*}
    A^{(t+1)} &= A^{(t)} + \gamma (X - A^{(t)} B^{(t)} X) X^T {B^{(t)}}^T\\
    B^{(t+1)} &= B^{(t)} + \gamma {A^{(t)}}^T (X - A^{(t)} B^{(t)} X) X^T     
\end{align*}
We first show by induction that under the given initialization, $A^{(t)}, B^{(t)}$ are always polynomials in $XX^T$, and hence, only the the singular values of $A^{(t)}, B^{(t)}$ need be considered in the update.   The base case holds by definition, and so we assume that the same is true at time $t$.  If $A^{(t)}, B^{(t)}$ are polynomials in $XX^T$ at time $t$, then $A^{(t)} = U_X \Sigma_A^{(t)} U_X^T$ and $B^{(t)} = U_X \Sigma_B^{(t)} U_X^T$ where $U_X$ is the left singular vectors of $X$ and $\Sigma_A^{(t)}, \Sigma_B^{(t)}$ are diagonal matrices that are polynomials in $Sigma_X$.  Then at time $t+1$: 
\begin{align*}
    A^{(t+1)} &= U_X \Sigma_A^{(t)} U_X^T +  \gamma(X - U_X \Sigma_A^{(t)} \Sigma_B^{(t)} U_X^T X) X^T U_X \Sigma_B^{(t)} U_X^T \\
    &= U_X \left( \Sigma_A^{(t)} + \gamma(\Sigma_X - \Sigma_A^{(t)} \Sigma_B^{(t)} \Sigma_X ) \Sigma_X \Sigma_B^{(t)} \right) U_X^T.
\end{align*}
Thus, $A^{(t+1)}$ is of the form $U_X \Sigma_A^{(t+1)} U_X^T$ where $\Sigma_A^{(t+1)}$ is a polynomial in $\Sigma_X$. Hence, $A^{(t+1)}$ is a polynomial in $XX^T$.  We can proceed analogously for $B^{(t+1)}$ and induction is complete.  The result above implies that we need only consider gradient descent updates on $\Sigma_A^{(t)}, \Sigma_B^{(t)}$, which proceed as follows: 
\begin{align*}
    \Sigma_A^{(t+1)} &= \Sigma_A^{(t)} + \gamma (\Sigma_X - \Sigma_A^{(t)} \Sigma_B^{(t)} \Sigma_X ) \Sigma_X \Sigma_B^{(t)} \\
    \Sigma_B^{(t+1)} &= \Sigma_B^{(t)} + \gamma \Sigma_A^{(t)} (\Sigma_X - \Sigma_A^{(t)} \Sigma_B^{(t)} \Sigma_X ) \Sigma_X .
\end{align*}

By re-arranging terms and taking the limit as $\gamma \to 0$ (i.e. gradient flow), we get: 
\begin{align*}
    d\Sigma_A &= (\Sigma_X - \Sigma_A \Sigma_B \Sigma_X ) \Sigma_X \Sigma_B \\
    d\Sigma_B &= \Sigma_A (\Sigma_X - \Sigma_A \Sigma_B \Sigma_X ) \Sigma_X  ~ ;
\end{align*}
where $\Sigma_A, \Sigma_B$ are functions of time $t$ (i.e. $\Sigma_A = \Sigma_A(t)$).  

By the balancedness equation (i.e. multiplying $d\Sigma_A$ by $\Sigma_A$ and multiplying $d\Sigma_B$ by $\Sigma_B$), the above yields:  
\begin{align*}
    \Sigma_A d\Sigma_A  &= \Sigma_B d\Sigma_B.
\end{align*}

Now by integration coordinate-wise in time, we get: 
\begin{align}
\label{equation: diagonal balancedness equation}
    {\Sigma_A^{(t)}}^2 - {\Sigma_A^{(0)}}^2 = {\Sigma_B^{(t)}}^2 - {\Sigma_B^{(0)}}^2.
\end{align}

Lastly, we use the condition that at the end of training, our neural network must interpolate the data.  We denote $\Sigma_A = \Sigma_A^{(\infty)}$  and $\Sigma_B = \Sigma_B^{(\infty)}$.  
\begin{align}
\label{equation: interpolation at end of training}
    f(X) = X \implies \Sigma_A \Sigma_B \Sigma_X = \Sigma_X    \implies \Sigma_A \Sigma_B = I_r ~;
\end{align}
where $I_r$ is a diagonal matrix of $r$ 1's (with $r$ denoting the rank of $X$). Now for each diagonal entry of $\Sigma_A, \Sigma_B$ (denoted ${\Sigma_A}_{i,i}, {\Sigma_B}_{i,i}$), Equations \eqref{equation: diagonal balancedness equation} and \eqref{equation: interpolation at end of training} yield a system of 2 equations with 2 unknowns, which can be solved exactly.  For ease of notation let ${\Sigma_A}_{i,i} = a_i$, ${\Sigma_B}_{i,i} = b_i$, ${\Sigma_x}_{i,i} = x_i$. Summarizing below for a single diagonal entry, we have: 
\begin{align*}
    a_i^2 - {a_i^{(0)}}^2 &= b_i^2 - {b_i^{(0)}}^2 \\
    a_i b_i &= 1 \\
\end{align*}

Let $c^2 = b_0^2 - a_0^2$.  Reducing the system of equations above, we have: 
\begin{align*}
    &\frac{1}{b_i^2} = b_i^2 - c^2 \\
    & \implies b_i^4 - c^2 b_i^2 -1 = 0 \\
    &\implies \left(b_i^2 - \frac{c^2}{2}\right)^2 = 1 + \frac{c^4}{4} \\
    &\implies b_i = \sqrt{\sqrt{\left(1 + \frac{c^4}{4} \right)} + \frac{c^2}{2}}.
\end{align*}

We take the positive square root since this yields the solution closest to the initialization.  
\end{proof}
\newpage 

\section{Proof of Corollary \ref{corollary: two gaussians after autoencoding}: Alignment of 2D Gaussians after autoencoding}
\label{appendix: two gaussians after autoencoding}

\begin{proof}
Let $\lambda_i$ be the $i$th eigenvalue of $XX^T$. By Theorem \ref{theorem: two layer linear autoencoder}, if we initialize $A^{(0)}=0$ and  $B^{(0)} = (\frac{1}{n} XX^T) = U_X \Sigma_B U_X^T$, then after gradient descent we have:
$$B = U_X \Sigma_B U_X^T, (\Sigma_B)_i = \sqrt{\frac{\lambda_i^{2}}{2} + \sqrt{1 + \frac{\lambda_i^{4}}{4}}}.$$

For our particular setting, from Corollary \ref{appendis: two gaussians with xxt}, we know that $B^{(0)}$ converges as $n$ tends towards infinity:
$$B^{(0)} \xrightarrow{a.s} I + 2 J = (c + 3) I + 4\hat{v}\hat{v}^T$$
with eigenvectors $\hat{v} = \frac{[1,1]^T}{\sqrt{2}}$ and $\hat{u} = [1, -1]^T/\sqrt{2}$ with eigenvalues:
$\lambda_1 = (c + 7), \lambda_2 = (c+3)$. 

Moreover, we note that the mapping from $B^{(0)} \rightarrow B$ is continuous since it is a composition of applying the matrix square root and matrix polynomials, which are both continuous. Then using the continuous mapping theorem, we have that $B$ converges almost surely to $\bar{B}$, where $\bar{B}$ can be expressed as 
$$U_X = \begin{bmatrix} \hat{v} & \hat{u} \end{bmatrix} = \frac{1}{\sqrt{2}}\begin{bmatrix}1 & 1 \\ 1 & -1\end{bmatrix}, \quad \Sigma_{\bar{B}} =\begin{bmatrix} \alpha &  0 \\ 0 & \beta \end{bmatrix}, \quad \bar{B} = U_X \Sigma_{\bar{B}} U_X^T$$
for
$$\alpha = (\Sigma_{\bar{B}})_1 = \sqrt{\frac{(c+7)^{2}}{2} + \sqrt{1 + \frac{(c+7)^{4}}{4}}}, \beta = (\Sigma_{\bar{B}})_2 = \sqrt{\frac{(c+3)^{2}}{2} + \sqrt{1 + \frac{(c+3)^{4}}{4}}}.$$

We thus have that that:
$$\cos(Bq_{\Sigma_1}, Bq_{\Sigma_2})  \xrightarrow{a.s} \cos(\bar{B}q_{\Sigma_1}, \bar{B}q_{\Sigma_2})  = \frac{2 + \gamma \sqrt{c^2 + 4}}{2\gamma + \sqrt{c^2 + 4}} , \quad \gamma = \frac{(\alpha^2 - \beta^2)}{(\alpha^2 + \beta^2)}$$
for the above $\alpha$ and $\beta$. Moreover, as $k$ increases, the $\gamma$ goes up, and  $Bq_{\Sigma_1}$ and $Bq_{\Sigma_2}$ become more aligned. 
\end{proof}
\newpage 



\section{Proof of Theorem \ref{theorem:non_linear}: Stretching of ReLU Autoencoders}
\label{appendix: non_linear}
We begin by first proving the following lemma about properties of gradient descent in the nonlinear setting. 
\begin{lemma}
Let $f(x) = A\phi(Bx)$ be a one hidden layer ReLU neural network,  with $A \in \mathbb{R}^{d \times h}, B \in \mathbb{R}^{h \times d}$, and $\phi$ as a ReLU activation. Let $X = \{x, -x\}$ with $x \in \mathbb{R}^{d}$ denote our training set, and let $\eta = ||x||_2^2$. Then, the following hold:   

1. \textbf{Invariance:} If $A^{(0)} = xa^T$ and $B^{(0)} = bx^T$ for $a, b \in \mathbb{R}^{h}$, then  $A^{(t)} = x(a^{(t)})^T$, $B^{(t)} = b^{(t)}x^T$ for some $a^{(t)}, b^{(t)} \in \mathbb{R}^{h}$.

2. \textbf{Gradients:} If $A = xa^T$ and $B = bx^T$ for $a, b \in \mathbb{R}^{h}$, then the gradients of the autoencoder loss, $\mathcal{L}$ in \eqref{eq: Autoencoder Loss} with respect to $A$ and $B$ are:
\begin{align*}
      (\nabla_A \mathcal{L})_{ij} = \begin{cases}
        2 \eta (\eta a^T \phi(b) - 1) x_i b_j & b_j > 0\\
        -2 \eta (\eta a^T \phi(b) + 1) x_i b_j & b_j \leq 0\\
    \end{cases} ~~~~   (\nabla_B \mathcal{L})_{ij} = \begin{cases}
        2 \eta (\eta a^T \phi(b) - 1) x_j a_i & b_i > 0\\
        -2 \eta (\eta a^T \phi(b) + 1) x_j a_i & b_i \leq 0\\
    \end{cases}.
\end{align*}
\label{theorem:relu_lemma}
\end{lemma}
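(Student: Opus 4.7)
My plan is to prove part 2 (the closed-form gradient formulas) first and then to read off part 1 (invariance) as a direct consequence of the entry-wise structure of those formulas. The starting observation for part 2 is that under the structured forms $A = xa^T$ and $B = bx^T$ one has $Bx = b\,x^T x = \eta b$, so by positive homogeneity of the ReLU
$$\phi(Bx) = \eta\,\phi(b), \qquad \phi(-Bx) = \eta\,\phi(-b),$$
and hence $A\phi(\pm Bx) = \eta\bigl(a^T\phi(\pm b)\bigr)\,x$ is a scalar multiple of $x$. Thus both residuals $x - A\phi(Bx)$ and $-x - A\phi(-Bx)$ are parallel to $x$, and the loss collapses to a function of only the two scalars $\eta\,a^T\phi(b)$ and $\eta\,a^T\phi(-b)$. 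To obtain the per-entry gradients of the unstructured loss $\mathcal{L}(A,B) = \|x - A\phi(Bx)\|_2^2 + \|-x - A\phi(-Bx)\|_2^2$, I would differentiate entry-by-entry using the chain rule with $\phi'(t) = \mathbf{1}[t > 0]$, then substitute the structured forms. Since $(Bx)_m = \eta b_m$, the derivatives of the two loss terms carry the indicators $\mathbf{1}[b_m > 0]$ and $\mathbf{1}[b_m < 0]$ respectively, so in each sign regime exactly one of the two terms contributes to each entry, yielding the claimed piecewise formulas.

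The invariance in part 1 then falls out by inspection of those formulas. Every entry $(\nabla_A \mathcal{L})_{ij}$ from part 2 carries the factor $x_i$ and otherwise depends only on $j$, so $\nabla_A \mathcal{L}$ has the form $x\,\tilde{v}^T$ for some $\tilde{v} \in \mathbb{R}^h$ determined by $a$ and $b$. Similarly every entry $(\nabla_B \mathcal{L})_{ij}$ carries the factor $x_j$, so $\nabla_B \mathcal{L} = \tilde{u}\,x^T$ for some $\tilde{u} \in \mathbb{R}^h$. Under gradient flow $\dot{A} = -\nabla_A \mathcal{L}$ and $\dot{B} = -\nabla_B\mathcal{L}$, the tangent vector at any structured point $(xa^T, bx^T)$ therefore lies in the linear subspace $\mathcal{S} = \{(x\alpha^T, \beta x^T) : \alpha, \beta \in \mathbb{R}^h\}$. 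Since $\mathcal{S}$ contains the initial point $(xa^{(0)T}, b^{(0)}x^T)$ and is closed under the flow, the trajectory stays in $\mathcal{S}$ for all $t$, giving $a^{(t)}, b^{(t)}$ with $A^{(t)} = x(a^{(t)})^T$ and $B^{(t)} = b^{(t)}x^T$ as required.

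The main obstacle is the non-differentiability of ReLU at $0$: when a coordinate $b_j^{(t)}$ crosses zero, the piecewise formulas switch branch and only a subgradient of $\mathcal{L}$ exists at that instant. I would handle this by observing that such sign-changes occur at isolated times, so the gradient flow is classically smooth on each open interval in between, the formulas in part 2 apply on every such interval, and $A^{(t)}, B^{(t)}$ remain continuous across the crossings; hence the rank-one structure extends globally in $t$ by a standard continuity argument. In the downstream use of the lemma (Theorem~\ref{theorem:non_linear}), the specific initialization $a^{(0)} = 0$, $b^{(0)} = [x; -x]$ further implies that the sign pattern of $b^{(t)}$ is preserved along the flow, so this technical issue does not actually arise in the application.
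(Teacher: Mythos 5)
Your proposal is correct and follows essentially the same route as the paper: compute the gradients of the two-term loss under the rank-one ansatz $A = xa^T$, $B = bx^T$ using $\phi(\pm Bx) = \eta\,\phi(\pm b)$, observe that the resulting gradients are themselves of the form $x\tilde v^T$ and $\tilde u x^T$, and read off both the piecewise entry formulas and the invariance from that structure. Your extra care about the non-differentiability of $\phi$ at $0$ (isolated sign-crossing times, and the fact that the intended initialization preserves the sign pattern of $b$) is a refinement the paper's proof silently omits rather than a genuinely different argument.
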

\begin{proof}
We first note that:
\begin{align*}
    \nabla_A L &= 2(A \phi(Bx) - x)\phi(Bx)^T + 2(A \phi(-Bx) + x)\phi(-Bx)^T\\
    &= 2\left(A\phi(Bx)\phi(Bx)^T + A \phi(-Bx)\phi(-Bx)^T - x(Bx)^T\right)\\
    \nabla_B L &= 2 \left(\left(A^T (A \phi(Bx) -x)\right) \odot \phi'(Bx) - \left(A^T (A \phi(-Bx) +x)\right) \odot \phi'(-Bx) \right)x^T.
\end{align*}

Moreover, if we can express $A=xa^T$ and $B=bx^T$, our gradients simplify to:
\begin{align*}
    \nabla_A L &= 2x \left(a^T \eta^2 \phi(b)\phi(b)^T + a^T \eta^2 \phi(-b)\phi(-b)^T - \eta b^T \right)\\
    &= 2\eta x  \left(\eta a^T (\phi(b)\phi(b)^T + \phi(-b)\phi(-b)^T) - b^T\right)\\
    \nabla_B L &= 2 \eta \left(\left(\eta a a^T \phi(b) -a \right) \odot \phi'(b) - \left(\eta aa^T \phi(-b) +a \right) \odot \phi'(-b) \right)x^T
\end{align*}

using the fact that $\phi(z) - \phi(-z) = z$. From these expressions, we find that we can express $\nabla_A L = x a'^T$ and $\nabla_B L = b' x^T$, proving our invariance property.

Now suppose that $A=xa^T$ and $B=bx^T$. Then we have:
$$(\nabla_A L)_{ij} = 2 \eta  (\eta a^T \phi(b) - 1) x_i b_j, \quad \text{if $b_j > 0$}$$
$$(\nabla_A L)_{ij} = 2 \eta  (-\eta a^T \phi(-b) - 1) x_i b_j, \quad \text{if $b_j \leq 0$}$$
and
$$(\nabla_B L)_{ij} = 2 \eta (\eta a^T \phi(b) - 1) x_j a_i \quad \text{if $b_i > 0$}$$
$$(\nabla_B L)_{ij} = -2 \eta (\eta a^T \phi(-b) + 1) x_j a_i  \quad \text{if $b_i \leq 0$}$$
which proves our second claim.

We empirically verify this result up to the third decimal place by training a network with SGD with learning rate $0.1$ with $x$ chosen as a uniformly random vector with dimension $1001$ and norm 2.
\end{proof}

The form of the gradients in Lemma \ref{theorem:relu_lemma} allows us apply a similar gradient flow analysis as in Theorem \ref{theorem: two layer linear autoencoder} to get a closed form for $A^{(\infty)}, B^{(\infty)}$. In the following lemma, we characterize the trajectories of $a$ and $b$ for $A=xa^T$ and $B=bx^T$ and establish equilibrium conditions on $a$ and $b$.
\begin{lemma}
\label{lemma: ReLU equilibrium}
In the setting of Lemma \ref{theorem:relu_lemma}, let $A^{(0)} = x(a^{(0)})^T$ and $B^{(t)} = b^{(0)}x^T$.  After $t$ steps of gradient descent on $A$ and $B$,  $A^{(t)} = x(a^{(t)})^T$ and $B^{(t)} = b^{(t)}x^T$ such that:
$$(a_i^{(t)})^2 - (a_i^{(0)})^2 = (b_i^{(t)})^2 - (b_i^{(0)})^2 .$$
Moreover, at equilibrium time step $t'$, the following hold:
$$(a^{(t')})^T\phi(b^{(t')}) = \frac{1}{\eta}, \quad (a^{(t')})^T\phi(-b^{(t')}) = \frac{-1}{\eta}.$$
\end{lemma}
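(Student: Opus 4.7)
The plan is to reduce the matrix-valued gradient flow on $(A,B)$ to a componentwise scalar flow on $(a,b)$, and then extract a balancedness-type conserved quantity coordinate by coordinate.

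First, the invariance claim of Lemma \ref{theorem:relu_lemma} already tells us that the ansatz $A^{(t)} = x(a^{(t)})^T$, $B^{(t)} = b^{(t)} x^T$ is preserved by gradient descent, so the only quantities that evolve are the vectors $a^{(t)}, b^{(t)} \in \mathbb{R}^h$. Moreover, the explicit gradient formulas in Lemma \ref{theorem:relu_lemma} can be written uniformly as $\nabla_A \mathcal{L} = x\, g_a^T$ and $\nabla_B \mathcal{L} = g_b\, x^T$, with coordinatewise entries of the common product form
$$(g_a)_j = c_j(a,b)\, b_j, \qquad (g_b)_j = c_j(a,b)\, a_j,$$
where $c_j(a,b) = 2\eta(\eta a^T \phi(b) - 1)$ when $b_j > 0$ and $c_j(a,b) = -2\eta(\eta a^T \phi(-b) + 1)$ when $b_j \le 0$. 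Gradient flow therefore reduces to $\dot a_j = -(g_a)_j$ and $\dot b_j = -(g_b)_j$ for each $j$.

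Second, for the balancedness identity I note that in either sign case $a_j (g_a)_j = c_j(a,b)\, a_j b_j = b_j (g_b)_j$, so
$$\tfrac{d}{dt}\bigl(a_j^2 - b_j^2\bigr) = 2\bigl(a_j \dot a_j - b_j \dot b_j\bigr) = -2\bigl(a_j (g_a)_j - b_j (g_b)_j\bigr) = 0.$$
Integrating from $0$ to $t$ yields $(a_j^{(t)})^2 - (a_j^{(0)})^2 = (b_j^{(t)})^2 - (b_j^{(0)})^2$, which is the first claim. For the equilibrium statement, at a stationary point $t'$ of the reduced flow both $g_a$ and $g_b$ vanish; for any coordinate $j$ with $b_j^{(t')} > 0$ this forces the scalar factor $c_j = 0$, i.e.\ $(a^{(t')})^T \phi(b^{(t')}) = 1/\eta$, and for any coordinate with $b_j^{(t')} < 0$ it forces $(a^{(t')})^T \phi(-b^{(t')}) = -1/\eta$. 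Under the initialization of Theorem \ref{theorem:non_linear}, $b^{(0)} = [x;-x]$ has both strictly positive and strictly negative entries, so both equations must hold at equilibrium.

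The main obstacle is the piecewise-smooth nature of the ReLU gradient: the vector field that drives $(a_j, b_j)$ changes form as $b_j$ crosses zero, so one has to argue that under the given initialization each coordinate stays on a single side of zero throughout the flow, allowing the case-split above to apply globally. The balancedness identity itself is the key tool here, since $a_j^2 - b_j^2$ is an invariant of motion and is set at initialization to magnitudes that are incompatible with $(a_j, b_j)$ reaching the origin, ruling out sign changes and making the two global equilibrium conditions simultaneously valid.
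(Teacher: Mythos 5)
Your proof is correct and follows essentially the same route as the paper's: reduce to the rank-one ansatz via the invariance property, observe that the two gradients share a common scalar factor so that $a_j\,\dot a_j = b_j\,\dot b_j$ coordinatewise, integrate to get the conserved quantity $a_j^2 - b_j^2$, and read off the equilibrium conditions from the vanishing of the gradients. Your added care about sign preservation of the coordinates of $b$ (via the invariant of motion) and about needing both sign classes to be nonempty --- which the initialization $b^{(0)} \propto [x^T, -x^T]^T$ of Theorem \ref{theorem:non_linear} guarantees --- makes explicit a point the paper's proof leaves implicit, but does not change the argument.
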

\begin{proof}
Using our closed form expressions for the gradient in Theorem \ref{theorem:relu_lemma}, we notice that:
$$A_{ij} dA_{ij} = B_{ji} dB_{ji}.$$
We can integrate of both sides of the equation to get the values of $A$ and $B$ at time $t$:
$$(A_{ij}^{(t)})^2 - (A_{ij}^{(0)})^2 = (B_{ji}^{(t)})^2 - (B_{ji}^{(0)})^2.$$
By our invariance property, we know that $A^{(t)}$ can be expressed as $x(a^{(t)})^T$ and $B^{(t)}$ can be expressed as $b^{(t)}x^T$. We can thus rewrite this form as:
$$(a_i^{(t)})^2 - (a_i^{(0)})^2 = (b_i^{(t)})^2 - (b_i^{(0)})^2 .$$

Moreover, we note that at equilibrium time step $t'$ our gradients must be 0, which gives us 
$$(a^{(t')})^T\phi(b^{(t')}) = \frac{1}{\eta}, \quad (a^{(t')})^T\phi(-b^{(t')}) = \frac{-1}{\eta}.$$
\end{proof}

Lemma \ref{lemma: ReLU equilibrium} allows us to provide closed forms for $A^{(\infty)}, B^{(\infty)}$. We are now ready to prove the overarching Theorem \ref{theorem:non_linear}.
\begin{proof}

Define $w^T = \begin{bmatrix} x^T & -x^T \end{bmatrix}$. Suppose that we initialize $B = bx^T$ and $A = xa^T$ such that $b = \beta w$ and $a = \alpha w$. Then we note that $a^T\phi(b) = -a^T\phi(-b) = \alpha \beta \eta$. We can easily see from Lemma \ref{theorem:relu_lemma} that such a parameterization is invariant across gradient steps, since the gradients take the form:
$$(\nabla_A L)_{ij} = 2 \eta \beta (\eta^2 \alpha \beta  - 1)x_i x_j = \beta C_{\alpha, \beta} x_i x_j$$
and
$$(\nabla_B L)_{ij} = 2 \eta \alpha (\eta^2 \alpha \beta  - 1)x_j x_i = \alpha C_{\alpha, \beta} x_i x_j$$
for $C_{\alpha, \beta} = 2 \eta  (\eta^2 \alpha \beta  - 1)$.

Then from Lemma \ref{theorem:relu_lemma}, we know that for equilibrium time step $t'$, $a^{(t')}$ can be expressed as $\alpha^{(t')} w$ and $b^{(t')}$ can be expressed as $\beta^{(t')}w$. Then we can use our findings from point 3 of the above Lemma:
\begin{align*}
    (\alpha^{(t')})^2 -  (\alpha^{(0)})^2 =   (\beta^{(t')})^2 -  (\beta^{(0)})^2\\
    \alpha^{(t')} \beta^{(t')} = \frac{1}{\eta^2}.
\end{align*}
Solving this system of equations,  for $\alpha^{(0)}=0$ and $\beta^{(0)}=1$ gives us
\begin{align*}
    ((\beta^{(t')})^2-1)(\beta^{(t')})^2 &= \frac{1}{\eta^4}\\
    (\beta^{(t')})^2 &= \frac{1}{2} \left(\frac{\sqrt{\eta^4 + 4}}{\eta^2} +1\right).
\end{align*}

Then we have that
$$||Bx||_2^2 = 2\eta^3(\beta^{(t')})^2 = \eta^3 \left(\frac{\sqrt{\eta^4 + 4}}{\eta^2} +1\right).$$
\end{proof}

\newpage

\section{Experimental Setup}
\label{appendix: experimental_setup}
We now describe the training setup and hardware for the experiments conducted in this work.  For all linear autoencoders, we use the Theorem \ref{theorem: two layer linear autoencoder} to directly compute the solution given by gradient flow.

\paragraph*{Alignment of Drug Signatures} To train the 1 hidden layer, leaky ReLU autoencoder on data from CMap, we use the available code from \cite{COVIDAutoencoding}.  We follow the training procedure from \cite{COVIDAutoencoding} for training the non-linear autoencoder.  Namely, we use Adam with a learning rate of $10^{-4}$, and we use an architecture with 1024 hidden units.  For the linear autoencoders, we use 911 hidden units.  

\paragraph*{GloVe} To train the leaky ReLU autoencoder on the GloVe data, we use a 1 hidden layer autoencoder with hidden dimension 400. We use SGD with a learning rate of 0.1 and momentum of 0.9. The weights of the linear autoencoders were computed using the closed form in Theorem \ref{theorem: two layer linear autoencoder}.

\paragraph*{Hardware} For the experiments on CMap, we use a server with 128GB RAM, 16 cores, and 2 NVIDIA 2080 Ti GPUs (12GB).  For the GloVe experiments, we used an NVIDIA v100 with 32GB of memory.

\newpage

\section{GloVe Appendix}
\label{appendix: glove}
\subsection{Interpretation of the Top Frequency Directions of GloVe}

Let $X \in \mathbb{R}^{300 \times 500,000}$ be the GloVe embeddings of the 500,000 most frequent words. We compute the top singular vector $u_1 \in \mathbb{R}^{300}$ of $X$. In Figure \ref{fig:top_glove_dir}, for each word with embedding $w$, we plot $\cos(w, u_1)$ against the frequency ranking for that word. We find that frequent words are more aligned with the top singular vector.

\label{appendix:glove_frequency}
\begin{figure}[!htbp]
    \centering
    \includegraphics[height=2.2in]{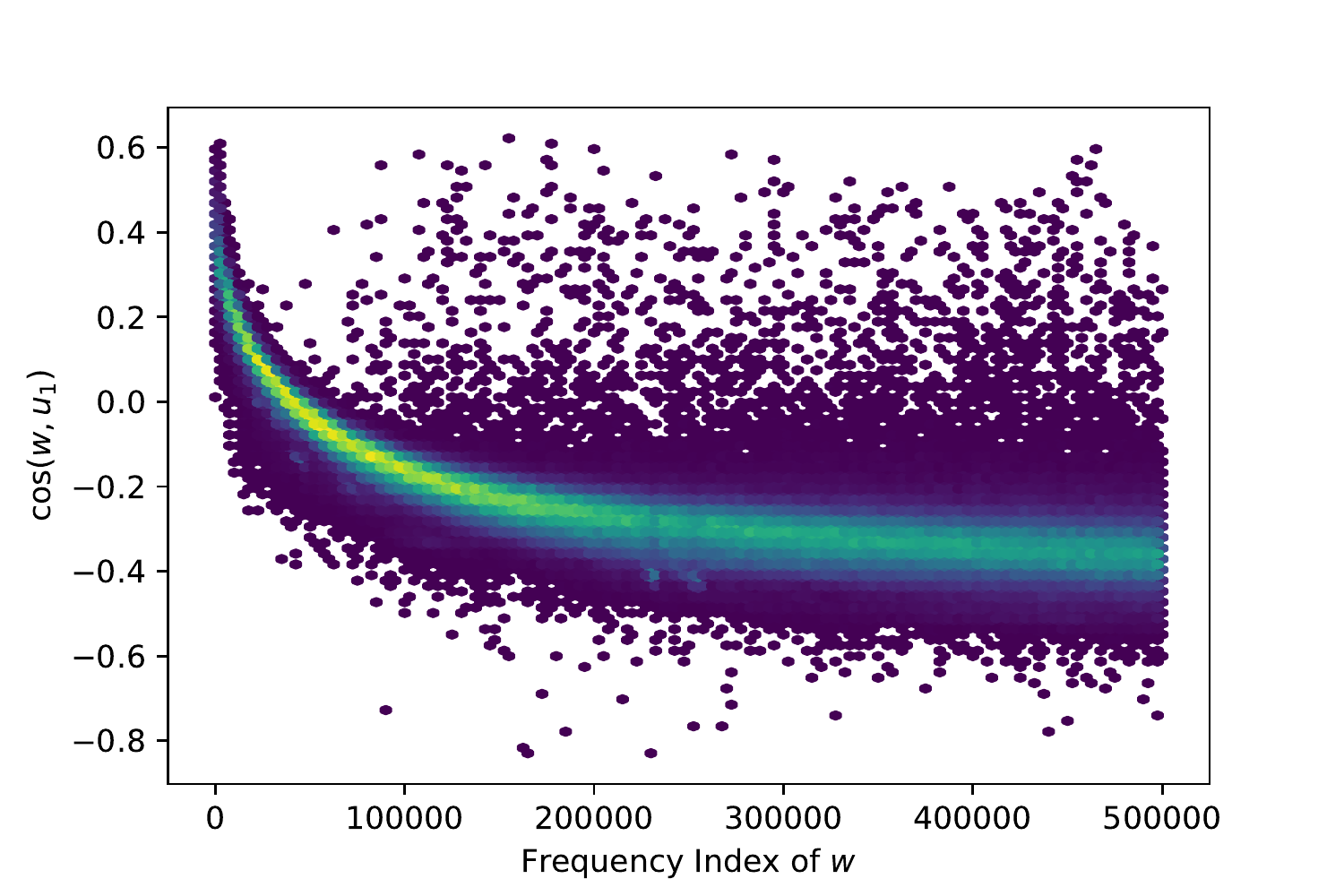}
    \caption{We plot the alignment of word embeddings with the top singular vector of the GloVE dataset against the word's frequency ranking. Here, green indicates higher density.}
    \label{fig:top_glove_dir}
\end{figure}

\subsection{Alignment of meaningful directions before and after autoencoding}
Let $(a_1, a_2)$ and $(b_1, b_2)$ be two pairs of word embeddings such that the corresponding words have the same relationship. An example might be the embeddings of (man, woman) and (king, queen). Then for encoding matrix $B$, we plot the cosine similarity of the translation vector before encoding (i.e $\cos(a_2 - a_1, b_2-b_1)$) and after encoding (i.e $\cos(Ba_2 - Ba_1, Bb_2-Bb_1)$).

We take every combination of pairs with the same relationship from the Google Analogy test set (this results in 19430 combinations), and plot using the linear autoencoder with asymmetric spectral initialization ($k=2$), as well as a LeakyReLU autoencoder (Figure \ref{fig:glove_corr_lots}). We find that both autoencoders increase the alignment of the word pairs, but using the linear autoencoder results in more alignment. For a linear autoencoder with $k=2$, 76\% of the autoencoded pairs have cosine similarity above 0.5.

\label{appendix: glove_alignment}
\begin{figure}[!htbp]
    \centering
    \begin{subfigure}{0.47\textwidth}
    \includegraphics[width=\textwidth]{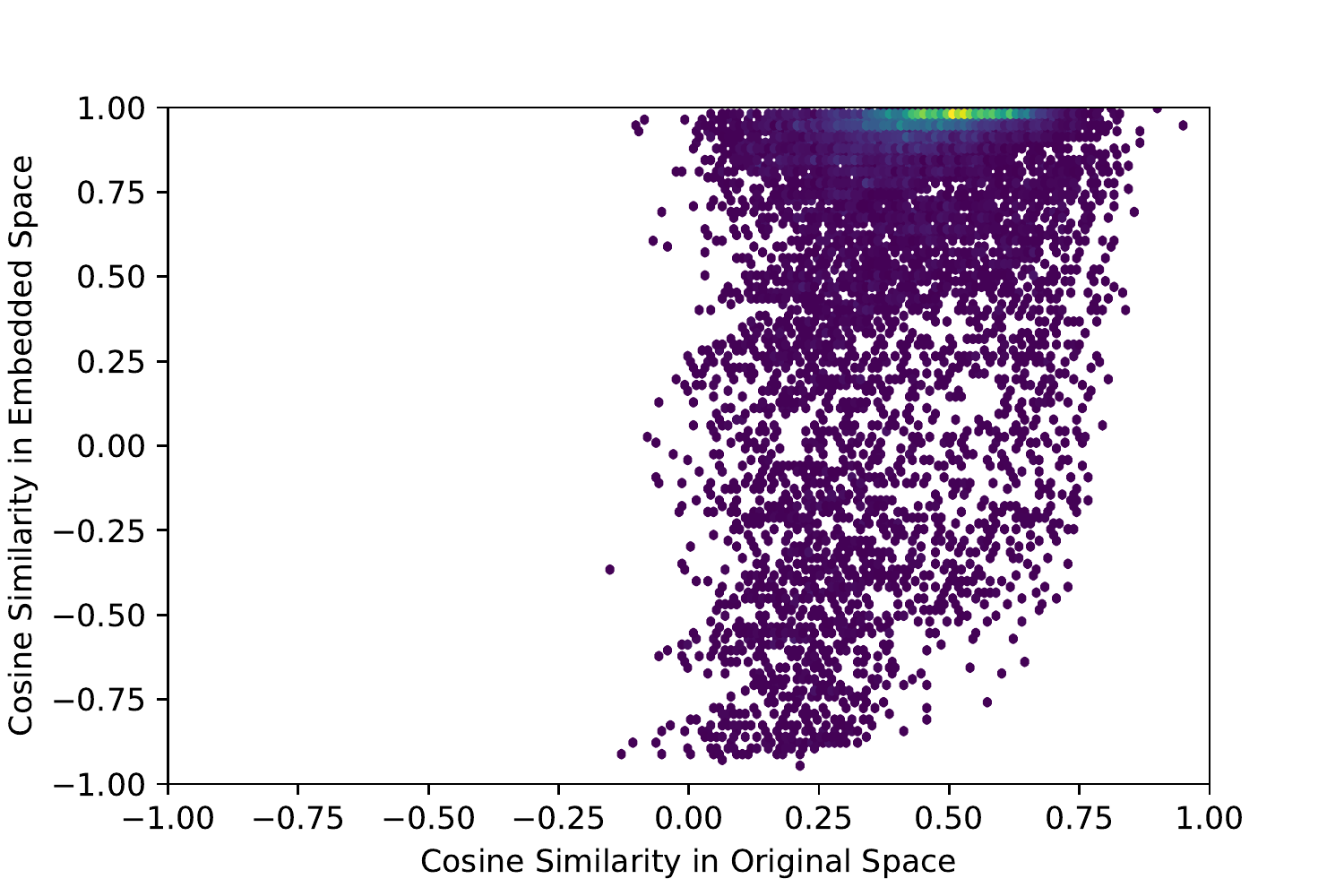}
    \caption{Linear Autoencoder (k=2)}
    \end{subfigure}
    \begin{subfigure}{0.47\textwidth}
    \includegraphics[width=\textwidth]{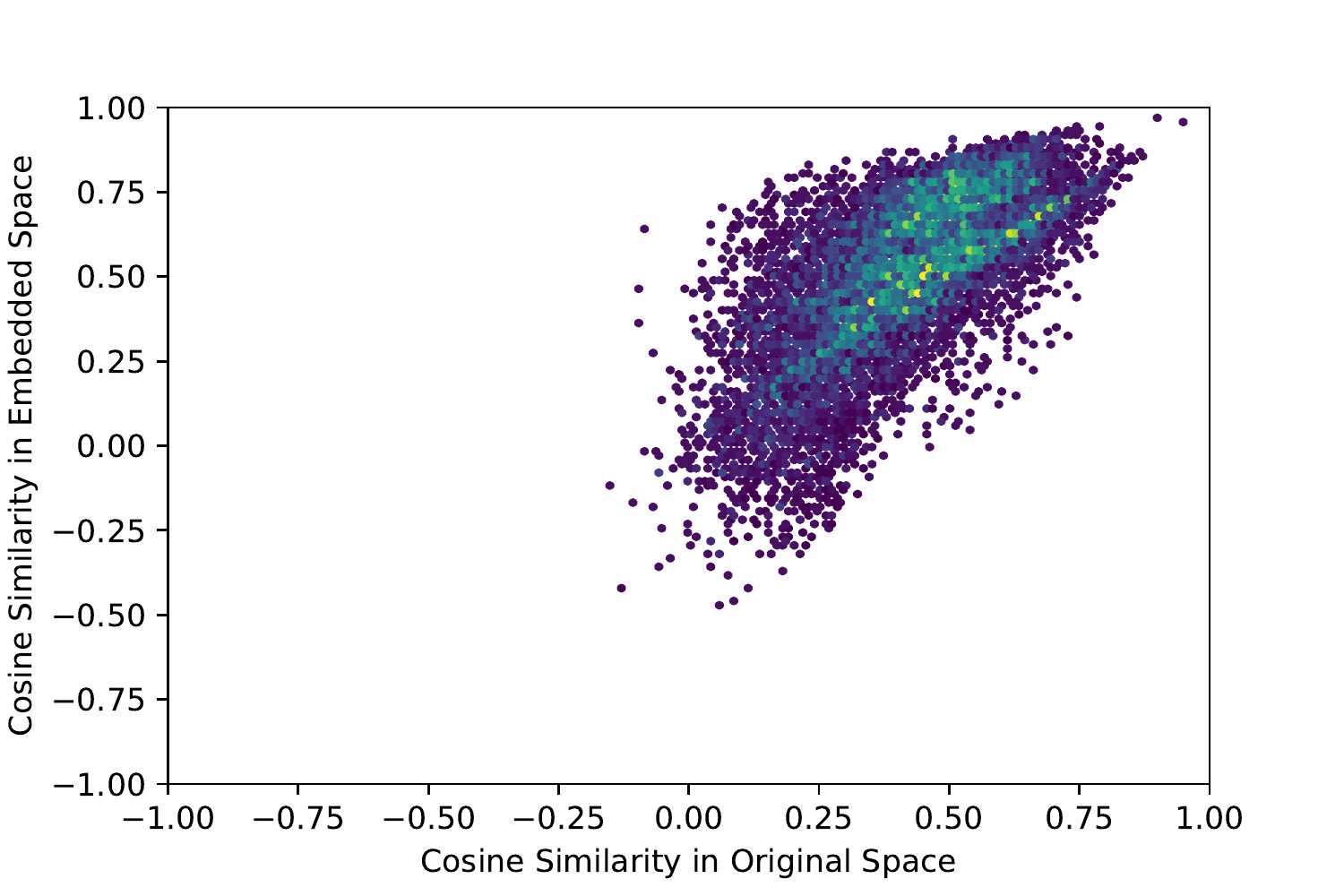}
    \caption{Leaky ReLU Autoencoder}
    \end{subfigure}
    \caption{We plot the cosine similarity of word pairs with the same relationship before and after autoencoding.  Here, green indicates higher density.}
    \label{fig:glove_corr_lots}
\end{figure}

\subsection{Visualizing alignment before and after autoencoding using MDS}
We visualize the latent space before and after autoencoding using multidimensional scaling (MDS). For a given relationship from the Google Analogy test set, we take the set of all words that appear in a pair for that relationship. We perform MDS with two components on the word embeddings of that set before and after autoencoding. Using the coordinates derived from MDS, we plot the pairs to visualize alignment.

In Figures \ref{fig:dense_negation} and \ref{fig:dense_superlative}, we visualize the alignment of two relationships: negation (i.e aware $\rightarrow$ unaware) and superlative (i.e cold $\rightarrow$ coldest). We find that embeddings derived from a linear autoencoder with asymmetric spectral initialization align the translation vectors for word pairs with the same relationship.
\label{appendix:glove_mds}

\begin{figure}[!htbp]
    \centering
    \begin{subfigure}{0.45\textwidth}
     \includegraphics[width=\textwidth]{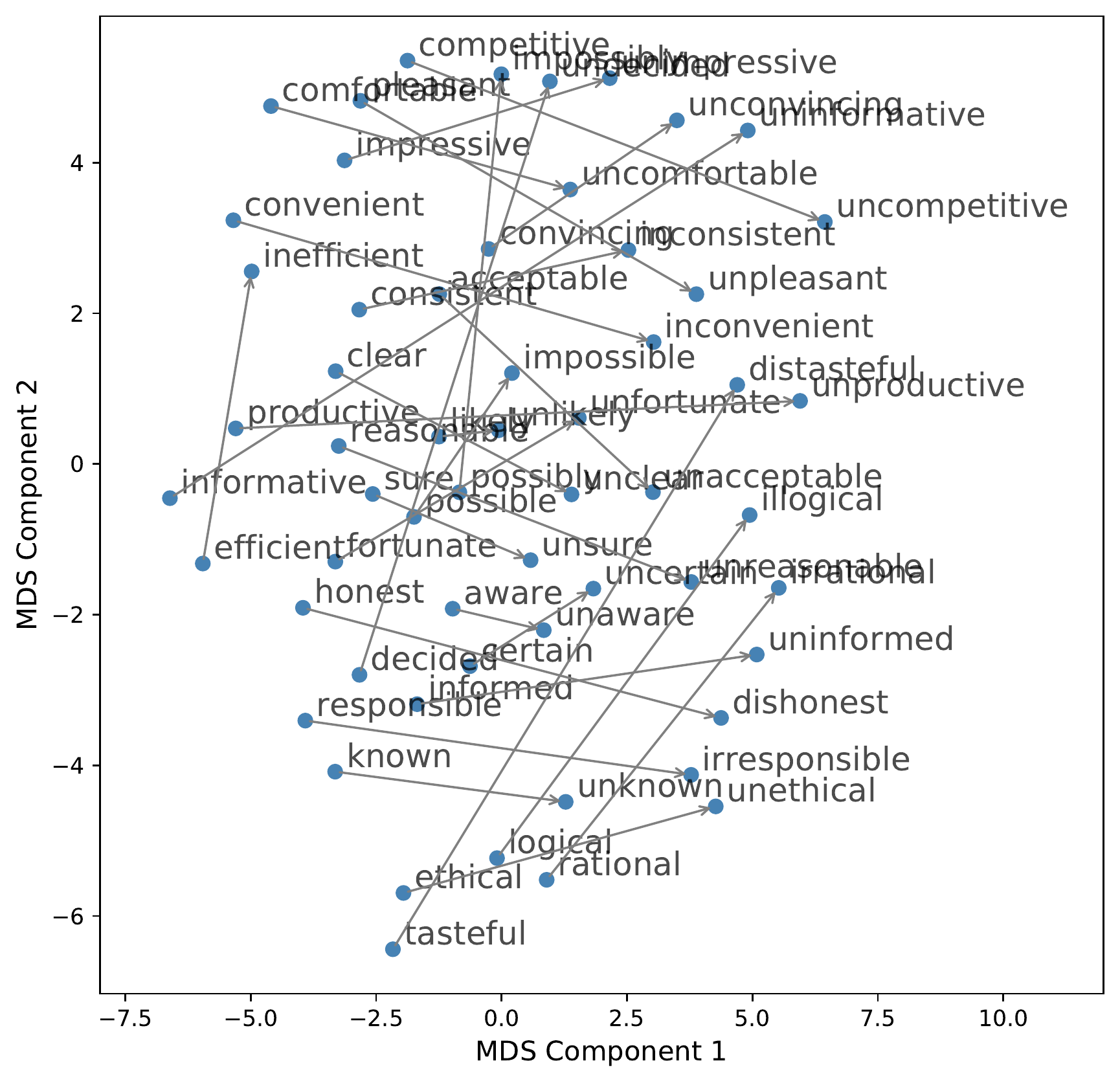}
     \caption{Original Space}
    \end{subfigure}
    \begin{subfigure}{0.45\textwidth}
     \includegraphics[width=\textwidth]{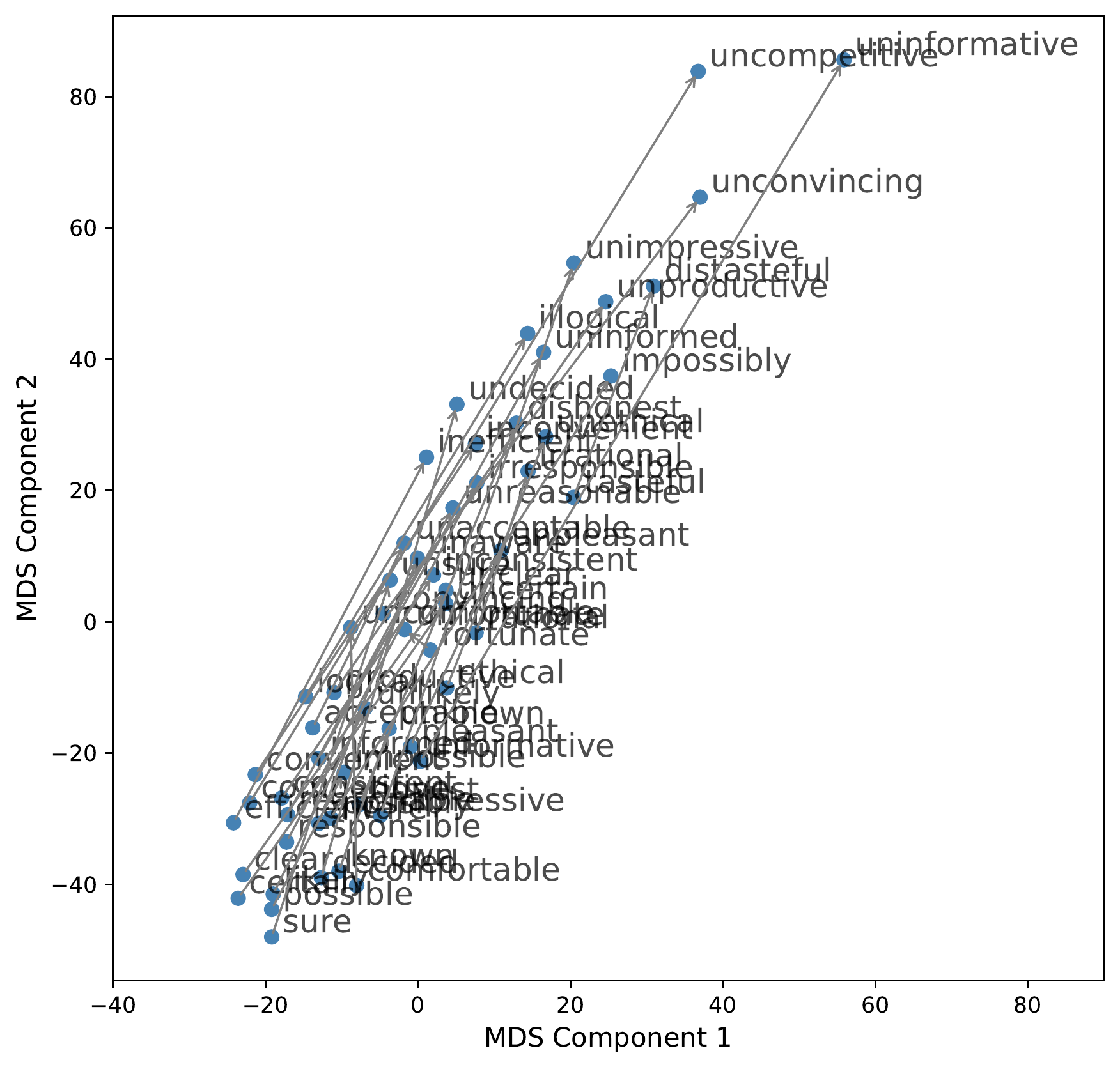}
     \caption{Linear Autoencoder (k=2)}
    \end{subfigure}
    \caption{We visualize the embeddings of words with a negation relationship using MDS both in the original space and linearly autoencoded space (k=2).}
    \label{fig:dense_negation}
\end{figure}

\begin{figure}[!htbp]
    \centering
    \begin{subfigure}{0.45\textwidth}
     \includegraphics[width=\textwidth]{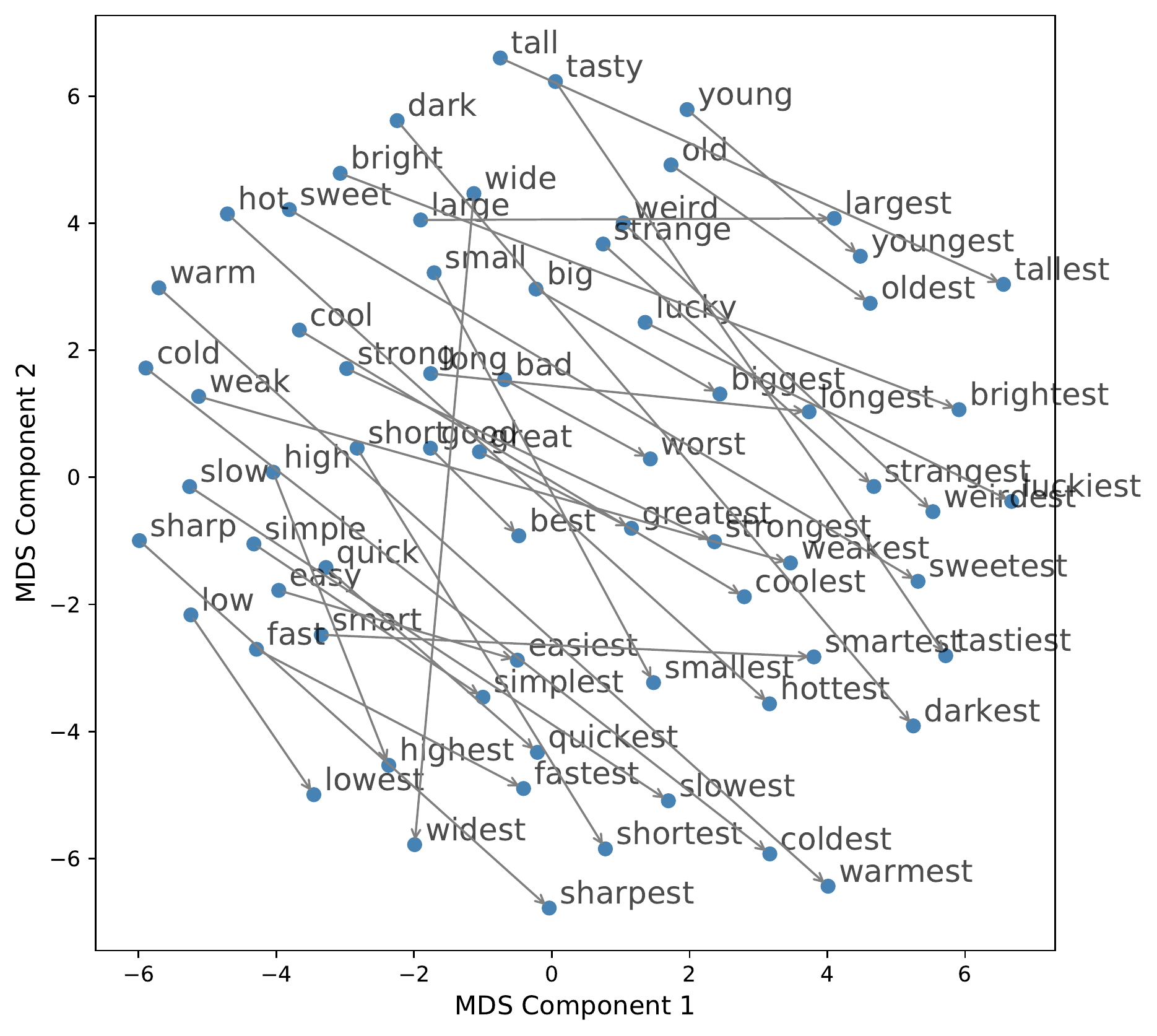}
     \caption{Original Space}
    \end{subfigure}
    \begin{subfigure}{0.45\textwidth}
     \includegraphics[width=\textwidth]{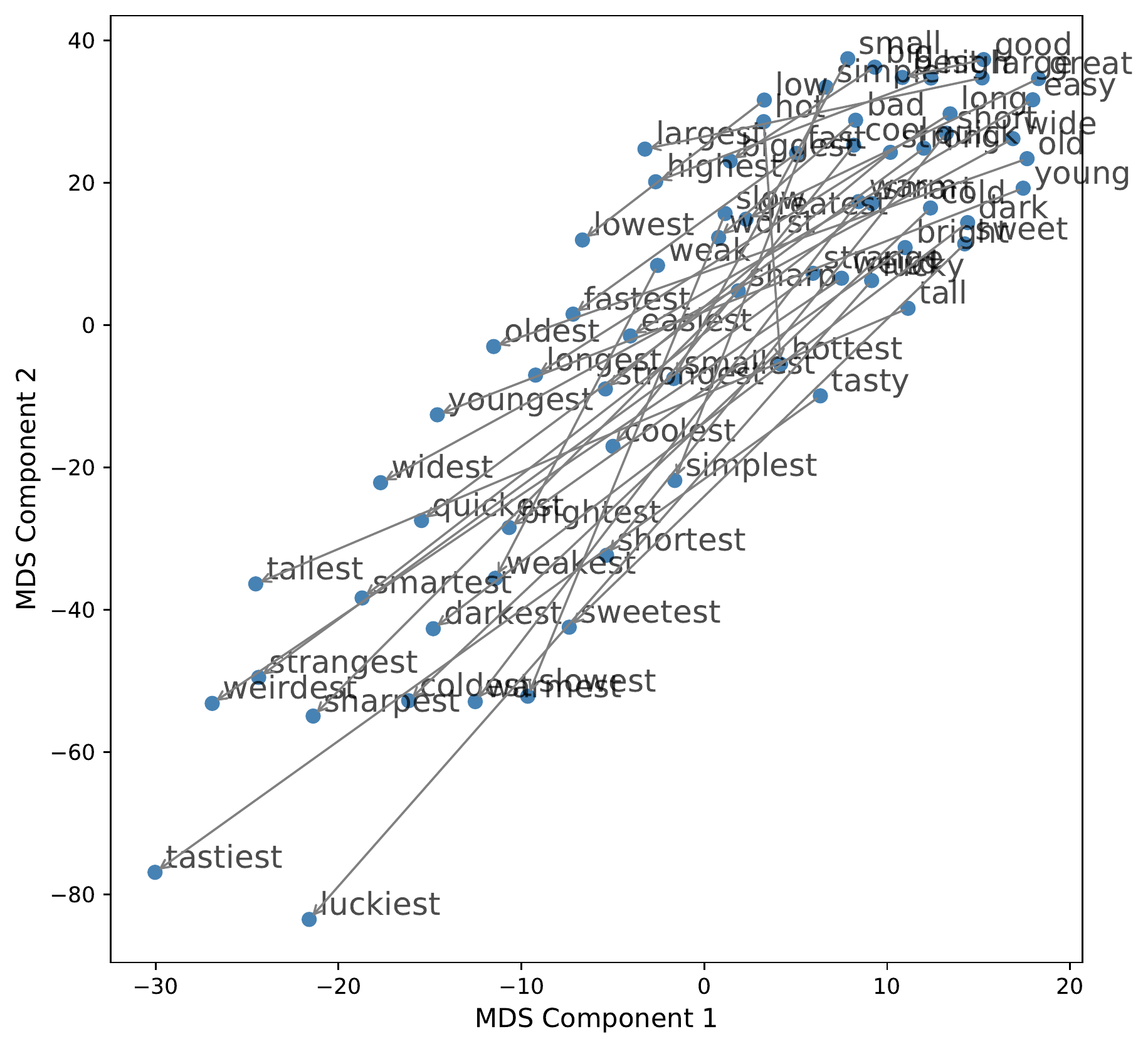}
     \caption{Linear Autoencoder (k=2)}
    \end{subfigure}
    \caption{We visualize the embeddings of words with a superlative relationship using MDS both in the original space and linearly autoencoded space (k=2)}
    \label{fig:dense_superlative}
\end{figure}

\subsection{Reference Point Correction}
\label{appendix: reference_point_correction}
Word similarity metrics capture the \textit{locality} of a latent space by computing whether similar words have similar embeddings \cite{chung2017learning}. To capture the similarity of two embeddings $a$ and $b$, these metrics typically compute their cosine similarity against a fixed reference point at the origin $o = \mathbf{0}$. 
Specifically, the similarity of the embeddings in the original space would be $\cos(a-o, b-o) = \cos(a,b)$ and the similarity after encoding with a matrix $B$ would be $\cos(B(a-o), B(b-o))=\cos(Ba, Bb)$.

 However, from our theory, we know that autoencoding will stretch along the top singular vectors. The point $o = \mathbf{0}$ is thus a poor reference point since it will not stretch along the rest of the dataset. We instead pick an reference point $o'$ that will stretch alongside the top principal component: for the original GloVe dataset $X \in \mathbb{R}^{d \times N}$ with top left singular value and vector $(\lambda_1, u_1)$, we set our reference point as $o' = \lambda_1 u_1$. This is equivalent to simply \textit{translating} our latent space before computing word similarity to a new origin point (note that translation will not change alignment).
 
 In Figure \ref{app_fig:ref_point_corr} we show the effect of setting the reference point to $\lambda_1 u_1$. When choosing the default origin at $\mathbf{0}$, the angle between embeddings $w_1$ and $w_2$ becomes much larger as we stretch in the red direction. However, by choosing the corrected $o'$, the reference point stretches alongside the rest of the dataset, so the angle between $w_1 -o'$ and $w_2 - o'$ does not significantly change after autoencoding.

\begin{figure}[!htbp]
    \centering
    \includegraphics[width=\textwidth]{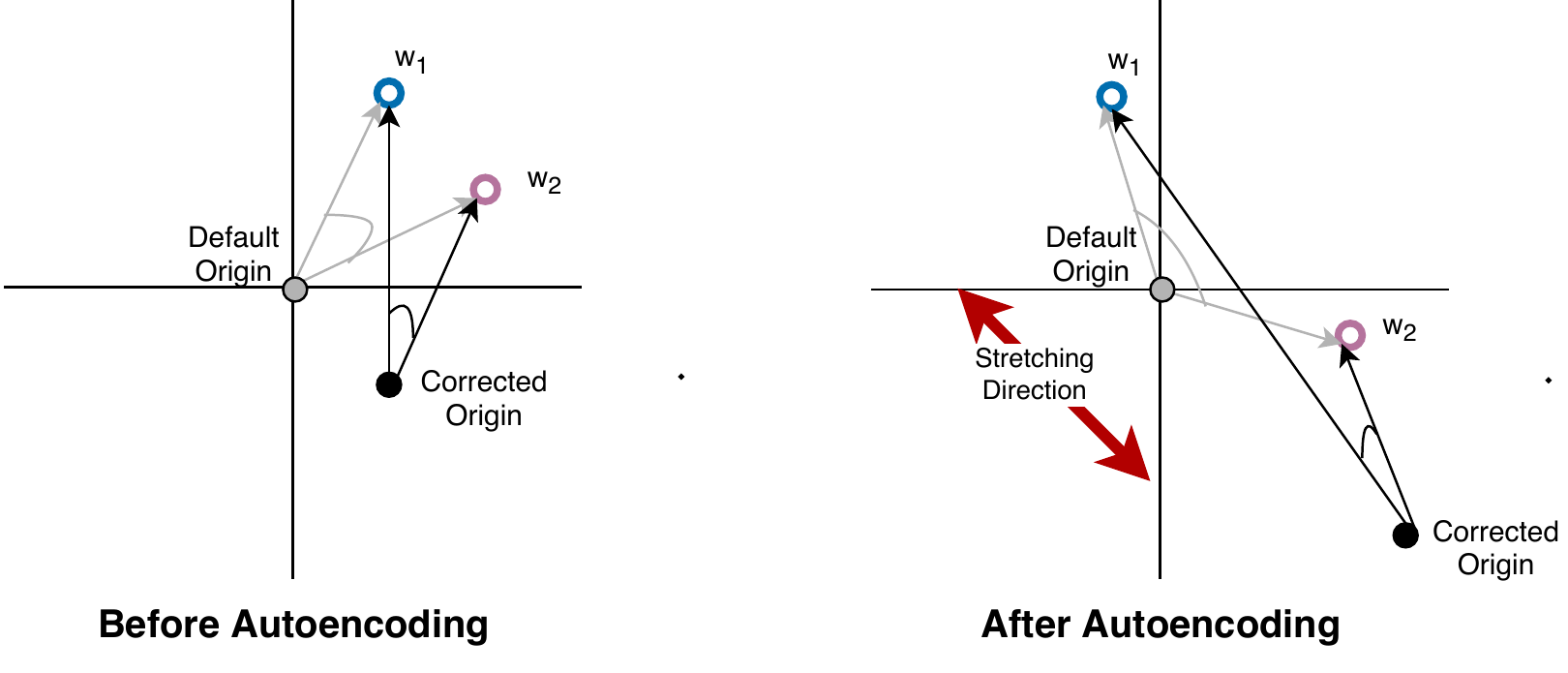}
    \caption{A schematic depicting reference point correction. When using a default reference point of $o = \mathbf{0}$ (the grey dot) the cosine similarity between embeddings $w_1$ and $w_2$ become distorted after stretching. However, if we choose a reference point along the stretching direction (the black dot), then the cosine similarity remains stable even after autoencoding.}
    \label{app_fig:ref_point_corr}
\end{figure}

In Table \ref{app_fig:ref_point_table}, we display the word similarity metric $\texttt{wsim}$ with and without reference point correction. As expected, we find that stretching using autoencoders degrades the word similarity metric, if we pick the default reference point $o=\mathbf{0}$. However, if we choose the corrected reference point $o'=\lambda_1 u_1$, the word similarity metrics remain largely stable, especially for the linear autoencoders.

\begin{table}[!htbp]
    \centering
    \caption{GloVE \texttt{wsim} with and without reference point correction.}
    \begin{tabular}{l|cc|cc}
    \toprule
         & \multicolumn{2}{c}{Default Origin} & \multicolumn{2}{c}{Corrected Origin}\\
        Model & MTurk-771 & MEN & MTurk-771 & MEN\\\midrule
        Original         & 0.715 & 0.804 & 0.629 &  0.739\\
        Linear AE (k=1)  & 0.431 & 0.457 & 0.628 &  0.738 \\
        Linear AE (k=2)  & 0.253 & 0.221 & 0.611 & 0.729\\
        Leaky ReLU AE    & 0.559 & 0.570 & 0.592 &  0.598 \\
        
        \bottomrule
    \end{tabular}
    \label{app_fig:ref_point_table}
\end{table}



\end{document}